\newcommand{\Acal}{\mathcal{A}}
\newcommand{\R}{\mathbb{R}}
\newcommand{\E}{\mathbb{E}}
\newtheorem{theorem}{Theorem}
\newtheorem{proposition}{Proposition}
\newtheorem{definition}{Definition}
\newcommand{\dd}{\mathrm{d}}
\title{Accelerating Optimization via Differentiable \\ Stopping Time}
\author{%
  Zhonglin Xie\\
  Beijing International Center for Mathematical Research\\
  Peking University\\
  \texttt{zlxie@pku.edu.cn} \\
  \And
  Yiman Fong \\
  Department of Industrial Engineering \\
  Tsinghua University \\
  \texttt{fangym23@mails.tsinghua.edu.cn} \\
  \And
  Haoran Yuan \\
  School of Mathematics Science \\
  Peking University \\
  \texttt{yuanhr@stu.pku.edu.cn} \\
  \And
  Zaiwen Wen \\
  Beijing International Center for Mathematical Research\\
  Peking University\\
  \texttt{wenzw@pku.edu.cn} \\
}
\begin{document}

\maketitle

\begin{abstract}
Optimization is an important module of modern machine learning applications. Tremendous efforts have been made to accelerate optimization algorithms. A common formulation is achieving a lower loss at a given time. This enables a differentiable framework with respect to the algorithm hyperparameters. In contrast, its dual, minimizing the time to reach a target loss, is believed to be non-differentiable, as the time is not differentiable. As a result, it usually serves as a conceptual framework or is optimized using zeroth-order methods. To address this limitation, we propose a differentiable stopping time and theoretically justify it based on differential equations. An efficient algorithm is designed to backpropagate through it. As a result, the proposed differentiable stopping time enables a new differentiable formulation for accelerating algorithms. We further discuss its applications, such as online hyperparameter tuning and learning to optimize. Our proposed methods show superior performance in comprehensive experiments across various problems, which confirms their effectiveness.
\end{abstract}

\section{Introduction}
\label{sec:introduction}

Optimization algorithms are fundamental to a wide range of applications, including operations research \cite{taha1997operations}, the training of large language models \cite{zhao2025surveylargelanguagemodels}, and decision-making in financial markets \cite{pilbeam2018finance}. Consequently, significant research effort has been dedicated to accelerating these algorithms. A common formulation for algorithm design and tuning, prevalent in areas like hyperparameter optimization \cite{feurer2019hyperparameter}  and learning to optimize (L2O) \cite{chen2024learning}, is to minimize the objective function value achieved after a fixed number of iterations or a predetermined time budget. This approach often leads to differentiable training objectives with respect to algorithmic hyperparameters, enabling gradient-based optimization of the algorithm itself.

However, this formulation does not directly optimize the number of iterations required to reach a desired performance level or target loss, which is often the practical goal in deployment. The dual objective, minimizing the time to reach a target loss, is traditionally perceived as non-differentiable with respect to algorithm parameters, as stopping time is typically an integer-valued or non-smooth function of parameters, addressed only conceptually or via zeroth-order optimization methods \cite{nemirovskij1983problem}.

To overcome this fundamental challenge and enable the direct, gradient-based optimization of convergence speed towards a target accuracy for iterative algorithms, this paper introduces the concept of differentiable stopping time. We propose a comprehensive framework that allows for the computation of sensitivities of the number of iterations required to reach a stopping criterion with respect to algorithm parameters. Our main contributions are summarized as follows:
\begin{itemize}
    \item We formulate a new class of differentiable objectives for algorithm acceleration, aiming to directly minimize the number of iterations or computational time required to achieve a target performance. This is supported by a theoretical framework that establishes the differentiability of discrete stopping time via a connection between discrete-time iterative algorithms and continuous-time dynamics, leveraging tools from the theory of continuous stopping times.
    \item We develop a memory-efficient and scalable algorithm for computing sensitivities of discrete stopping time, enabling effective backpropagation through iterative procedures. Our experimental results validate the accuracy and efficiency of the proposed method, particularly in high-dimensional settings, and show clear advantages over approaches relying on exact ordinary differential equation solvers.
    \item We demonstrate the applicability of differentiable stopping time in practical applications, including L2O and the online adaptation of optimizer hyperparameters. These case studies show that differentiable stopping time can be seamlessly integrated into existing frameworks, and our empirical evaluations suggest that it provides a principled and effective lens for understanding and improving algorithmic acceleration.
\end{itemize}

\subsection{Related Work}
\label{sec:literature}

\textbf{ODE Perspective of Accelerated Methods.}
Offering a continuous-time view of optimization algorithms, this perspective provides both theoretical insights and practical improvements. The foundational work \cite{su2016differential} established a connection between Nesterov's accelerated gradient method and a second-order ordinary differential equation, introducing a dynamical systems viewpoint for understanding acceleration. Building on this, acceleration phenomena have been analyzed through high-resolution differential equations \cite{shi2022understanding}, revealing deeper insights into optimization dynamics. The symplectic discretization of these high-resolution ODEs \cite{shi2019acceleration} has also been explored, leading to practical acceleration techniques with theoretical guarantees. A Lyapunov analysis of accelerated gradient methods was developed in \cite{laborde2020lyapunov}, extending the framework to stochastic settings. For optimization on parametric manifolds, accelerated natural gradient descent methods have been formulated in \cite{li2025angd}, based on the ODE perspective.

\textbf{Implicit Differentiation in Deep Learning.}
This technique enables efficient gradient computation through complex optimization procedures. A modular framework for implicit differentiation was presented in \cite{blondel2022efficient}, unifying existing approaches and introducing new methods for optimization problems. In non-smooth settings, \cite{bolte2021nonsmooth} developed a robust theory of nonsmooth implicit differentiation with applications to machine learning and optimization. Implicit differentiation has also been applied to train iterative refinement algorithms \cite{chang2022object}, treating object representations as fixed points. For non-smooth convex learning problems, fast hyperparameter selection methods have been developed using implicit differentiation \cite{bertrand2022implicit}. Training techniques for implicit models that match or surpass traditional approaches have been explored in \cite{geng2021training}, leveraging implicit differentiation. Implicit bias in overparameterized bilevel optimization has been investigated in \cite{vicol2022implicit}, providing insights into the behavior of implicit differentiation in high-dimensional settings. In optimal control, implicit differentiation for learning problems has been revisited in \cite{xu2023revisiting}, where new methods for differentiating optimization-based controllers are proposed.

\textbf{Learning to Optimize.}
This emerging paradigm leverages machine learning techniques to design optimization algorithms. A comprehensive overview of L2O methods \cite{chen2022learning} categorizes the landscape and establishes benchmarks for future research. The scalability of L2O to large-scale optimization problems has been explored in \cite{chen2022scalable}, showing that learned optimizers can effectively train large neural networks. To enhance the robustness of learned optimizers, policy imitation techniques were introduced in \cite{chen2020training}, significantly improving L2O model performance. Generalization capabilities have been studied by developing provable bounds for unseen optimization tasks \cite{yang2023learning}. In \cite{yang2023ml2o}, meta-learning approaches are proposed for fast self-adaptation of learned optimizers. The theoretical foundations of L2O have been strengthened through convergence guarantees for robust learned optimization algorithms \cite{song2024towards}. Furthermore, L2O has been extended to the design of acceleration methods by leveraging an ODE perspective of optimization algorithms \cite{xie2024ode}.

\section{Differentiable Stopping Time: From Continuous to Discrete}
\label{sec:diff-stop-time}
We consider an iterative algorithm that arises from the discretization of an underlying continuous-time dynamical system. Let $\Acal({\theta}, x, t)$ be a function that defines the instantaneous negative rate of change for the state $x \in \R^d$ at time $t$, parameterized by ${\theta}$. The input ${\theta}$ could represent, for example, parameters of a step size schedule or weights of a learnable optimizer. Given $t_0$ and $x_0 \in \R^d$, the continuous-time dynamics are given by the ordinary differential equation (ODE)
\begin{equation}
    \dot{x}(t) = -\Acal({\theta}, x(t), t), \quad \text{with initial condition } x(t_0) = x_0.
    \label{eq:ode_general_form}
\end{equation}
The trajectory $x(t)$ aims to minimize a function $f(x)$, and $\Acal$ is typically related to $f(x)$. Applying the forward Euler discretization method to the ODE \eqref{eq:ode_general_form} with a time step $h>0$ yields the iterative algorithm
\begin{equation}
    x_{k+1} = x_k - h\Acal({\theta}, x_k, t_k),
    \label{eq:iter_form_general}
\end{equation}
where $x_k$ is the approximation of $x(t_k)$, and $t_k = t_0 + kh$. We emphasize that $h$ serves as the discretization step for the ODE. The ``effective step size'' of the optimization algorithm at iteration $k$ is $h$ times any scaling factors embedded within $\Acal(\theta, x_k, t_k)$. We provide two simple examples of \eqref{eq:iter_form_general} as follows. This model also captures more sophisticated algorithms, such as the gradient method with momentum and LSTM-based learnable optimizers, as illustrated in Appendix \ref{app:examples}.

\begin{figure}[htbp]
    \centering
    \begin{subfigure}[b]{0.49\textwidth}
        \centering
        \includegraphics[width=\linewidth]{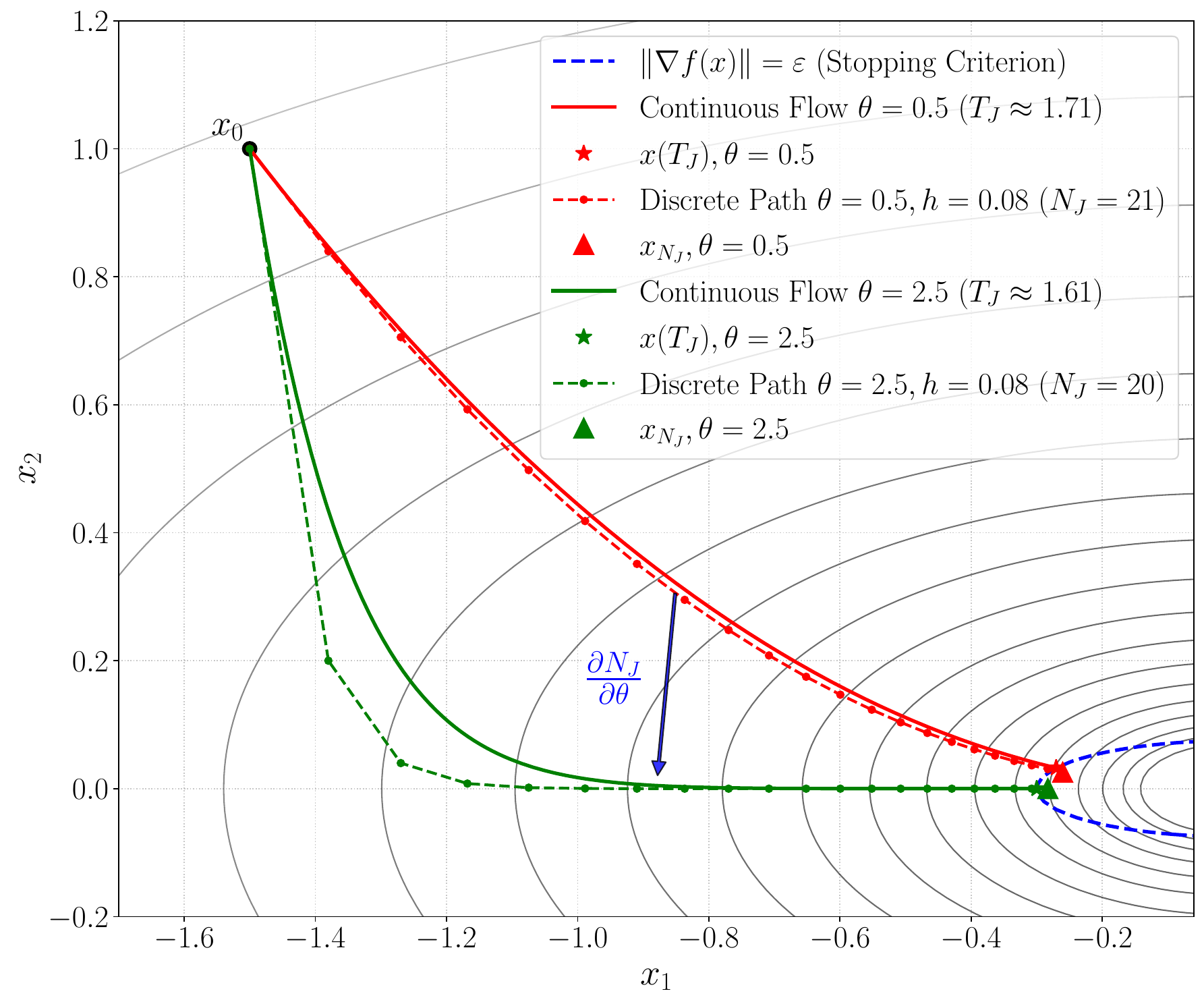}
        \caption{Trajectory vs discrete path}
        \label{fig:illustration}
    \end{subfigure}
    \hfill
    \begin{subfigure}[b]{0.49\textwidth}
        \centering
        \includegraphics[width=\linewidth]{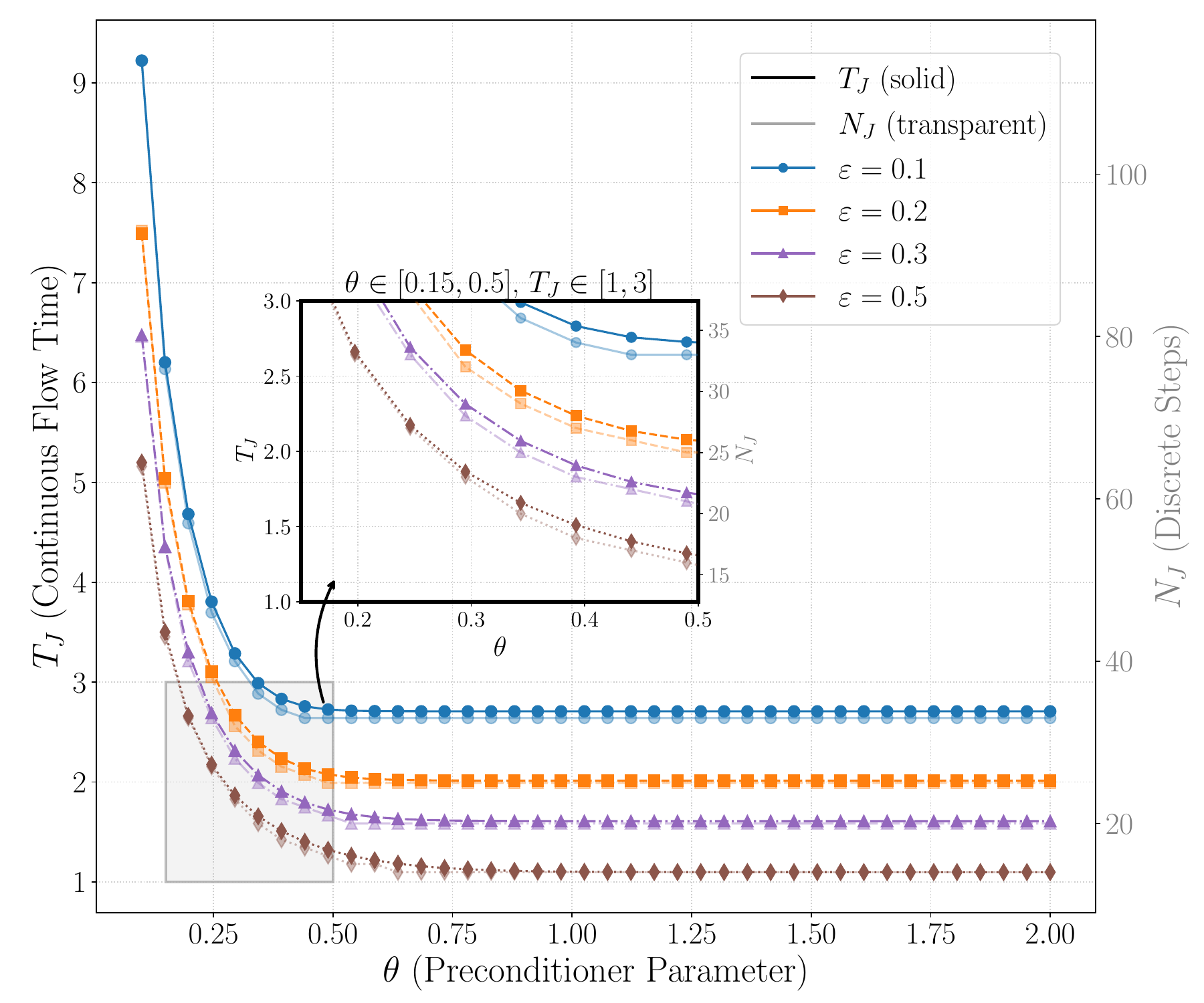}
        \caption{Stopping time vs parameters}
        \label{fig:time-theta}
    \end{subfigure}
    \caption{Illustration of the differentiable stopping time on $f(x_1,x_2)=0.5x_1^2+2x_2^2$ and $\mathcal{A}(x,\theta,t)=\operatorname{diag}(1,\theta)\nabla f(x)$. Effect of $\theta$ on continuous and discrete stopping time $T_J,N_J$ for different $\varepsilon$ values.}
    \label{fig:combined}
\end{figure}

\textbf{Rescaled Gradient Flow.}
A common instance is the rescaled gradient flow, where $\Acal$ incorporates a time-dependent and parameter-dependent scaling factor $\alpha(\theta, t)$ for the gradient of the objective function $f(x)$. In this case
\begin{equation}
    \Acal(\theta, x(t), t) = \alpha(\theta, t) \nabla f(x(t)).
\end{equation}
The ODE becomes $\dot{x}(t) = -\alpha(\theta, t) \nabla f(x(t))$. The parameters $\theta$ might define the functional form of $\alpha$, e.g., if $\alpha(\theta, t) = \theta_1 e^{-\theta_2 (t-t_0)}$, then $\theta=(\theta_1, \theta_2)$. The effective step size for the discretized iteration $x_{k+1} = x_k - h \alpha(\theta, t_k) \nabla f(x_k)$ is $h \cdot \alpha(\theta, t_k)$.

\textbf{Learned Optimizer.}
Another relevant scenario involves the optimizer using a parametric model, such as a neural network. Let $\mathcal{N}(\cdot; \theta)\colon \R^d\times\R^d\times\R\to\R^d$ denote a neural network parameterized by weights $\theta$. This network could learn, for instance, a diagonal preconditioning matrix $\operatorname{diag}(\mathcal{N}(x, \nabla f(x), t; \theta))$. Then, the function $\Acal$ is defined as
\begin{equation}
\label{eq:preconditioner}
    \Acal(\theta, x(t), t) = \operatorname{diag}(\mathcal{N}(x(t), \nabla f(x(t)), t; \theta)) \nabla f(x(t)).
\end{equation}
The discretized update would then use this learned preconditioned gradient.

\subsection{Differentiating the Continuous Stopping Time}
We first give a formal definition of the continuous stopping time.
\begin{definition}[Continuous Stopping Time]
Given a function $J$, for a stopping criterion defined by the condition $J(x)=\varepsilon$, the continuous stopping time is the first time that the trajectory reaches it:
\begin{equation}
    T_{J}(\theta,x_0,\varepsilon)\colon=\inf_{t}\{t\mid J(x(t))\leq \varepsilon ,  t\geq t_0, x(t) \text{ solves \eqref{eq:ode_general_form}}\}.
\end{equation}
When $J(x(t))$ never reaches the target $\varepsilon$, we set $T_{J}(\theta,x_0,\varepsilon)=+\infty$.
\end{definition}
Now, we present a theorem that establishes conditions for the differentiability of the stopping time $T_J$ with respect to parameters $\theta$ or the initial condition $x_0$.

\begin{theorem}[Differentiability of Continuous Stopping Time]
\label{thm:differentiability}
Let $T = T_J(\theta, x_0, \varepsilon)$ be the continuous stopping time such that $J(x(T)) = \varepsilon$.
We assume that the function $\Acal(\theta, x, t)$ is continuously differentiable with respect to $\theta$, $x$, and $t$. Additionally, the stopping criterion function $J(x)$ is assumed to be continuously differentiable with respect to $x$. Finally, it is assumed that the time derivative of the criterion function along the trajectory does not vanish at time $T$, that is,
$$
\frac{\dd}{\dd t} J(x(t)) \Big|_{t=T} = \nabla J(x(T))^\top \dot{x}(T) \neq 0.
$$
Then, the solution $x(t; \theta, x_0)$ of the ODE system is continuously differentiable with respect to its arguments $\theta$ and $x_0$ for $t$ in a neighborhood of $T$. The stopping time $T_J(\theta, x_0, \varepsilon)$ is continuously differentiable with respect to $\theta$ (and $x_0$) in a neighborhood of the given $(\theta, x_0)$ where $T_J < \infty$.
Specifically, its derivatives with respect to a component $\theta$ and $x_{0}$ are given by
\begin{equation}
    \frac{\partial T_J}{\partial \theta} = - \frac{\nabla J(x(T))^\top {\partial x(T)}/{\partial \theta}}{\nabla J(x(T))^\top \dot{x}(T)},\qquad
    \frac{\partial T_J}{\partial x_{0}} = - \frac{\nabla J(x(T))^\top {\partial x(T)}/{\partial x_{0}}}{\nabla J(x(T))^\top \dot{x}(T)}.
\end{equation}
\end{theorem}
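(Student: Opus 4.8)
The plan is to recognize this as a direct application of the Implicit Function Theorem (IFT), with the hypotheses tailored precisely to supply the two ingredients the IFT requires: smoothness of the relevant map and non-degeneracy of the relevant partial derivative in the ``time'' direction.

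First I would establish the smooth dependence of the ODE flow on its data. Since $\Acal(\theta, x, t)$ is assumed $C^1$ in all of its arguments, the classical theorem on smooth dependence of ODE solutions on initial conditions and parameters guarantees that the solution map $(t, \theta, x_0) \mapsto x(t; \theta, x_0)$ of \eqref{eq:ode_general_form} is $C^1$ on an open set containing $(T, \theta, x_0)$. This immediately yields the first assertion of the theorem, and it makes the sensitivities $\pd x/\pd \theta$ and $\pd x/\pd x_0$ well-defined (they satisfy the associated variational equations, whose explicit form I would not need here).

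Next I would introduce the auxiliary scalar map $G(t, \theta, x_0) := J(x(t; \theta, x_0)) - \varepsilon$. By the defining property of the stopping time, $G(T, \theta, x_0) = 0$; and since $J$ is $C^1$ and the flow is $C^1$, $G$ is $C^1$, with the chain rule giving
$$\frac{\pd G}{\pd t}\Big|_{t=T} = \nabla J(x(T))^\top \dot{x}(T) = \frac{\dd}{\dd t} J(x(t)) \Big|_{t=T},$$
which is nonzero \emph{exactly} by the transversality hypothesis. The IFT then produces a $C^1$ function $(\theta, x_0) \mapsto T_J(\theta, x_0)$ on a neighborhood of the given point, solving $G(T_J, \theta, x_0) = 0$ uniquely there. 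The stated derivative formulas follow by differentiating this identity with respect to $\theta$ (resp. $x_0$), applying the chain rule through both the explicit $T_J$-dependence and the flow, and solving the resulting scalar relation for $\pd T_J/\pd \theta$ (resp. $\pd T_J/\pd x_0$).

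The main obstacle, and where I would spend the most care, is reconciling the IFT root with the $\inf$ in the definition of $T_J$: the theorem posits $T$ as a first hitting time of the sublevel set $\{J \le \varepsilon\}$, whereas the IFT only furnishes a \emph{local} zero of $G$. The bridge is transversality itself: because $\frac{\dd}{\dd t}J(x(t))|_{t=T} \neq 0$, the trajectory crosses the level set $\{J = \varepsilon\}$ transversally rather than tangentially, so near $t = T$ the function $G$ is strictly monotone through zero. This forces $T$ to be an isolated crossing coinciding locally with the unique IFT root, and—by continuity of the $C^1$ data—this crossing structure persists for parameters near $(\theta, x_0)$, so the IFT branch genuinely tracks the first hitting time rather than some later recrossing. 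I would make this monotonicity-through-zero step explicit, since tangential contact (a vanishing time derivative) is precisely the degenerate scenario that could destroy differentiability of the $\inf$, and the transversality assumption is exactly what excludes it.
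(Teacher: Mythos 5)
Your proposal follows essentially the same route as the paper's own proof: both define the auxiliary map $G(t, \theta, x_0) = J(x(t; \theta, x_0)) - \varepsilon$, combine smooth dependence of the ODE flow with the transversality hypothesis $\nabla J(x(T))^\top \dot{x}(T) \neq 0$ to invoke the implicit function theorem, and then obtain the stated formulas by differentiating the identity $G(T_J, \theta, x_0) = 0$ and solving the resulting scalar relation. Your closing paragraph---verifying that the IFT root actually coincides with the first hitting time defined by the infimum, via strict monotone crossing at $T$ and persistence of that crossing structure (so no earlier or later hitting is picked up) for nearby $(\theta, x_0)$---addresses a point the paper's proof passes over in silence, so your version is, if anything, the more complete of the two.
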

The differentiability of $x(T)$ with respect $\theta$ and $x_0$ is guaranteed by the smooth dependence of solutions on initial conditions and parameters. The terms ${\partial x(T)}/{\partial \theta}$ and ${\partial x(T)}/{\partial x_{0}}$ are sensitivities of the state $x$ at time $T$ with respect to $\theta$ and $x_{0}$ respectively, which can be obtained by solving the corresponding sensitivity equations or via adjoint methods. The proof is an application of the implicit function theorem, which is deferred to Appendix \ref{app:differentiability}. For the differentiability under weaker conditions, one may refer to \cite[Proposition 2]{xie2024ode}, which confirms the path differentiability \cite{bolte2021conservative} when $\Acal$ involves non-smooth components.

\subsection{Differentiable Discrete Stopping Time: An Effective Approximation}
The continuous stopping time $T_J$ is an ideal measure of algorithm efficiency. However, backpropagating through it requires solving forward and backward (adjoint) differential equations numerically. This can incur significant computational overhead, and many of the detailed steps evaluated by an ODE solver might be considered "wasted" compared to the coarser steps of the original iterative algorithm \eqref{eq:iter_form_general}. We now return to the discrete iteration \eqref{eq:iter_form_general} and propose an efficient approach to approximate $\nabla_{\theta} T_J$ and $\nabla_{x_0} T_J$.
\begin{definition}[Discrete Stopping Time]
\label{def:discrete-stopping-time}
For a stopping criterion $J(x)=\varepsilon$ and the iterative algorithm \eqref{eq:iter_form_general}, the discrete stopping time is the smallest integer such that $J(x_K) \le \varepsilon$:
\begin{equation}
    N_J(\theta, x_0, \varepsilon)\colon=\min_{n}\{n\mid J(x_n) \le \varepsilon,n\geq 0,\{x_n\}_{n=0}^\infty \text{ satisfies \eqref{eq:iter_form_general}} \}.
\end{equation}
If $J(x_n) > \varepsilon$ for all $n \ge 0$, we set $N_J = +\infty$.
\end{definition}
While $N_J$ is inherently an integer, to enable its use in gradient-based optimization of $\theta$ or $x_0$, we seek a meaningful way to define its sensitivity to these parameters. Our approach is inspired by Theorem \ref{thm:differentiability}. We conceptualize $N$ as a continuous variable for which the condition $J(x_N(\theta, x_0)) \approx \varepsilon$ holds exactly at the stopping time. Formally differentiating this identity with respect to $\theta$ gives
$$
0\approx\nabla J(x_N)^\top \left( \frac{\partial x_N}{\partial N} \textcolor{brown}{\frac{\partial N}{\partial \theta}} + \frac{\partial x_N}{\partial \theta} \right)\approx \frac{J(x_N)-J(x_{N-1})}{h}\textcolor{brown}{\frac{\partial N}{\partial \theta}} + \nabla J(x_N)^\top\frac{\partial x_N}{\partial \theta} .
$$
Under suitable regularity assumptions, this approximation allows us to define the sensitivity of the discrete stopping time.

\begin{definition}[Sensitivity of the Discrete Stopping Time]
\label{def:sensitivity_N}
Assume that the conditions of Theorem~\ref{thm:differentiability} hold. Let $N = N_J(\theta, x_0, \varepsilon)$ denote the discrete stopping time. Then, the sensitivities of $N$ with respect to $\theta$ and $x_{0}$ are defined as
\begin{equation}
\label{eq:def-diff-stop-time}
\frac{\partial N}{\partial \theta} \colon= - \frac{h\nabla J(x_N)^\top {\partial x_N}/{\partial \theta}}{J(x_N) - J(x_{N-1})},\qquad
\frac{\partial N}{\partial x_{0}} \colon= - \frac{h\nabla J(x_N)^\top {\partial x_N}/{\partial x_{0}}}{J(x_N) - J(x_{N-1})}.
\end{equation}
\end{definition}

Since Definition \ref{def:discrete-stopping-time} ensures $J(x_N) - J(x_{N-1})<0$, the above expressions are well-defined. Beyond being a natural symbolic differentiation of the discrete stopping condition, this definition also serves as an effective approximation of the gradient of the continuous stopping time. The next theorem formalizes this connection by quantifying the approximation error between the sensitivities of the discrete stopping time and the gradients of the continuous stopping time.

\begin{theorem}[Approximation Error for Gradient of Stopping Time]
\label{thm:approx-error}
Let $J(x) = \varepsilon$ be a stopping criterion, and let $h > 0$ be a time step size. Assume the discrete stopping index $N_J$ satisfies $T_J \in (t_0 + (N_J - 1)h, \; t_0 + N_J h]$, where $T_J$ is the (continuous) stopping time and $t_0$ is the initial time. Suppose that the function $\mathcal{A}$ is twice continuously differentiable. We assume that $\mathcal{A}(\theta,x(t),t)$, regarded as a function of $(\theta, t)$, has uniformly bounded $W^{2,\infty}$ norms with respect to $(\theta, t)$ in a neighborhood of $\theta \times [t_0,\; t_0 + N_J h]$, and that $J$, regarded as a function of $x$, has a uniformly bounded $W^{2,\infty}$ norm in a neighborhood of $x(T_J)$. Furthermore, suppose the boundary condition $\nabla J(x(T_J))^\top\dot{x}(T_J)\neq 0$ holds. Then, for sufficiently small $h$, the following holds
\begin{equation}
\label{eq:error-bound}
\left\| \nabla_{\theta} T_J(\theta, x_0, \varepsilon) - \nabla_{\theta} N_J(\theta, x_0, \varepsilon) \right\| = \mathcal{O}(h).
\end{equation}
\end{theorem}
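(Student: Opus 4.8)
The plan is to write both gradients as ratios and to show that their numerators and denominators agree up to $\mathcal{O}(h)$, after which a standard quotient estimate—valid because the common limit of the denominators is nonzero—yields the claim. Write $T=T_J$, $N=N_J$, and $t_k=t_0+kh$. By Theorem~\ref{thm:differentiability} and Definition~\ref{def:sensitivity_N},
$$
\nabla_\theta T_J = -\frac{P_c}{Q_c},\qquad \nabla_\theta N_J = -\frac{P_d}{Q_d},
$$
with $P_c=\nabla J(x(T))^\top S(T)$, $Q_c=\nabla J(x(T))^\top\dot x(T)$, $P_d=\nabla J(x_N)^\top S_N$, and $Q_d=(J(x_N)-J(x_{N-1}))/h$; here $S(t)=\pd x(t)/\pd\theta$ solves the variational equation $\dot S=-\pd_x\Acal\,S-\pd_\theta\Acal$ and $S_k=\pd x_k/\pd\theta$ is its discrete counterpart. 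It therefore suffices to establish $P_d=P_c+\mathcal{O}(h)$ and $Q_d=Q_c+\mathcal{O}(h)$.

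The backbone of both estimates is the first-order accuracy of forward Euler. Since $\Acal$ is twice continuously differentiable with the stated uniform bounds, the classical global error bound gives $\|x_k-x(t_k)\|=\mathcal{O}(h)$ uniformly for $t_k\in[t_0,t_0+Nh]$. Differentiating the iteration with respect to $\theta$ shows that $S_k$ is exactly the forward Euler scheme applied to the variational ODE, whose right-hand side is Lipschitz because $\pd_x\Acal$ and $\pd_\theta\Acal$ are (using $\Acal\in C^2$); hence $\|S_k-S(t_k)\|=\mathcal{O}(h)$ as well. Finally, the assumption $T\in(t_0+(N-1)h,\,t_0+Nh]$ gives $|t_N-T|\le h$ and $|t_{N-1}-T|\le h$, so Lipschitz continuity of $x(\cdot)$ and $S(\cdot)$ in $t$ (their time derivatives are bounded) transfers the estimates from the grid points to the true stopping time: $\|x_N-x(T)\|=\mathcal{O}(h)$, $\|x_{N-1}-x(T)\|=\mathcal{O}(h)$, and $\|S_N-S(T)\|=\mathcal{O}(h)$.

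The numerator comparison is then immediate: since $J\in W^{2,\infty}$ near $x(T)$, $\nabla J$ is Lipschitz, so $\nabla J(x_N)=\nabla J(x(T))+\mathcal{O}(h)$, and combining with $S_N=S(T)+\mathcal{O}(h)$ and the boundedness of all factors gives $P_d=P_c+\mathcal{O}(h)$. For the denominator, a second-order Taylor expansion of $J$ about $x_{N-1}$ together with $x_N-x_{N-1}=-h\Acal(\theta,x_{N-1},t_{N-1})$ yields
$$
Q_d=\frac{J(x_N)-J(x_{N-1})}{h}=-\nabla J(x_{N-1})^\top\Acal(\theta,x_{N-1},t_{N-1})+\mathcal{O}(h),
$$
where the $\mathcal{O}(h)$ remainder uses the bounded Hessian of $J$ and $\|x_N-x_{N-1}\|=\mathcal{O}(h)$. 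Replacing $x_{N-1}$ by $x(T)$ and $t_{N-1}$ by $T$ (Lipschitz continuity of $\nabla J$ and of $\Acal$, each with error $\mathcal{O}(h)$) and using $\dot x(T)=-\Acal(\theta,x(T),T)$ gives $Q_d=\nabla J(x(T))^\top\dot x(T)+\mathcal{O}(h)=Q_c+\mathcal{O}(h)$.

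To conclude, the boundary condition $Q_c=\nabla J(x(T))^\top\dot x(T)\neq 0$ ensures $|Q_c|$ is bounded below by a positive constant, so for $h$ small enough $|Q_d|\ge|Q_c|/2>0$. The elementary identity $P_d/Q_d-P_c/Q_c=(P_d Q_c-P_c Q_d)/(Q_d Q_c)$ then has numerator $\mathcal{O}(h)$ and denominator bounded away from zero, giving $\|\nabla_\theta T_J-\nabla_\theta N_J\|=\mathcal{O}(h)$. I expect the main obstacle to be the denominator estimate: it must simultaneously handle the second-order Taylor remainder (hence the $W^{2,\infty}$ hypothesis on $J$) and the transfer from the Euler point $(x_{N-1},t_{N-1})$ to the continuous stopping state $(x(T),T)$ across a time gap of size $\mathcal{O}(h)$; keeping every such replacement first order in $h$ is precisely where the uniform $C^2$ and $W^{2,\infty}$ bounds are essential.
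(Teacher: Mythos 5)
Your proposal is correct and follows essentially the same route as the paper's proof: write both gradients as ratios, use the forward Euler global error bound on the state and on the variational (sensitivity) equation, transfer estimates from the grid point to $T_J$ across a gap of size at most $h$, Taylor-expand $J$ to handle the discrete denominator, and finish with a quotient estimate using the nonvanishing boundary condition. The only point you gloss over—and the paper spells out via a Gronwall argument—is that the sensitivity $\partial x(t)/\partial\theta$ must first be shown uniformly bounded on $[t_0,T_J]$ before the Euler error proposition (which needs a bounded right-hand side) can be applied to the variational ODE.
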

This theorem demonstrates that Definition~\ref{def:sensitivity_N} provides an approximation to the gradient of the continuous stopping time. That is, $\nabla_{\theta} N_J$ converges to $\nabla_{\theta} T_J$ as $h \to 0$. An analogous result holds for the gradient with respect to $x_0$. This result serves as a theoretical justification for using the symbolic discrete sensitivity \eqref{eq:def-diff-stop-time} as a surrogate for the continuous counterpart. In \cite{xie2024ode}, it is proved that, under mild condition, given a special form of $\Acal$, we have $\|x_{k}-x(t_k)\|\to 0$ using forward Euler discretization \eqref{eq:iter_form_general} with a fixed time step size $h$. This sheds the light for future improvement of the estimate in \eqref{eq:error-bound} without assuming sufficiently small time step size.

\subsection{Efficient Computation of the Sensitivity}
The primary challenge in \eqref{eq:def-diff-stop-time} is computing the numerator term $\nabla J(x_N)^\top \partial x_N / \partial \theta$. If the function $\Acal$ and the iteration process are implemented within an automatic differentiation framework (e.g., PyTorch, TensorFlow) where $\Acal$ might be a learnable \texttt{nn.Module}, then the numerator can be obtained by unrolling the computation graph and applying backpropagation. However, this can be unstable and memory-intensive for large $N$. Other methods include finite differences or stochastic gradient estimators, which are inexact.

Alternatively, the discrete adjoint method provides a memory-efficient way to compute the required vector-Jacobian products $\nabla J(x_N)^\top (\partial x_N / \partial \theta)$ and $\nabla J(x_N)^\top (\partial x_N / \partial x_0)$ without forming the Jacobians explicitly. This method involves a forward pass to compute the trajectory $x_0, \dots, x_N$, followed by a backward pass that propagates adjoint (or co-state) vectors. Let $x_{k+1} = G_k(x_k, \theta) = x_k - h\Acal(\theta, x_k, t_k)$ be the iterative update. Algorithm~\ref{alg:discrete_adjoint_computation} outlines the procedure to compute the term $S_{\theta} = \nabla J(x_N)^\top (\partial x_N / \partial \theta)$ and $S_{x_0} = \nabla J(x_N)^\top (\partial x_N / \partial x_0)$. The correctness of Algorithm~\ref{alg:discrete_adjoint_computation} is established by the following theorem. The proof is presented in Appendix \ref{app:proof-adjoint}.

\begin{algorithm}
\caption{Discrete Adjoint Method for Sensitivity Components}
\label{alg:discrete_adjoint_computation}
\begin{algorithmic}[1]
\State \textbf{Input:} Forward trajectory $\{x_k\}_{k=0}^N$, parameters $\theta$, $J(x_N)$, time step $h$, initial time $t_0$.
\State \textbf{Output:} $S_{\theta} = \nabla J(x_N)^\top (\partial x_N / \partial \theta)$ and $S_{x_0} = \nabla J(x_N)^\top (\partial x_N / \partial x_0)$.

\State $\lambda \gets \nabla J(x_N)$.  \Comment{Initialize adjoint vector}
\State $S_{\theta} \gets \bm{0}$ (vector of same size as $\theta$). \Comment{Initialize sensitivity component for $\theta$}

\For{$k = N-1$ \textbf{downto} $0$}
    \State $t_k \gets t_0 + kh$.
    \State $S_{\theta} \gets S_{\theta} - h \left(\frac{\partial \Acal(\theta, x_k, t_k)}{\partial \theta}\right)^\top \lambda$. \Comment{Accumulate contribution to $S_{\theta}$}
    \State $\lambda \gets \left(I - h \frac{\partial \Acal(\theta, x_k, t_k)}{\partial x_k}\right)^\top \lambda$. \Comment{Propagate adjoint vector backward}
\EndFor
\State $S_{x_0} \gets \lambda$. \Comment{After the loop, $\lambda$ represents $\nabla J(x_N)^\top (\partial x_N / \partial x_0)$}
\State \Return $S_{\theta}$, $S_{x_0}$.
\end{algorithmic}
\end{algorithm}

\begin{proposition}[Discrete Adjoint Method]
\label{thm:adjoint_correctness}
Let the sequence $x_0, \dots, x_N$ be generated by $x_{k+1} = x_k - h\Acal(\theta, x_k, t_k)$.
The quantities $S_{\theta}$ and $S_{x_0}$ computed by Algorithm~\ref{alg:discrete_adjoint_computation} are equal to $\nabla J(x_N)^\top (\partial x_N / \partial \theta)$ and $\nabla J(x_N)^\top (\partial x_N / \partial x_0)$, respectively.
\end{proposition}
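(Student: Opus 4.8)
The plan is to exhibit closed-form expressions for the two vector--Jacobian products via the chain rule, and then verify by a reverse-induction loop invariant that Algorithm~\ref{alg:discrete_adjoint_computation} accumulates exactly these expressions. First I would introduce, for each index $k$, the adjoint covector
\[
\lambda_k := \left(\frac{\partial x_N}{\partial x_k}\right)^{\!\top} \nabla J(x_N),
\]
where $\partial x_N/\partial x_k$ is the Jacobian of the map sending $x_k$ to $x_N$ through the remaining iterations. Since $\partial x_N/\partial x_N = I$ we get $\lambda_N = \nabla J(x_N)$, matching the initialization of $\lambda$. Differentiating the update $x_{k+1} = G_k(x_k,\theta) = x_k - h\Acal(\theta, x_k, t_k)$ with respect to $x_k$ gives the one-step Jacobian $\partial x_{k+1}/\partial x_k = I - h\,\partial \Acal(\theta, x_k, t_k)/\partial x_k$, and the composition rule $\partial x_N/\partial x_k = (\partial x_N/\partial x_{k+1})(\partial x_{k+1}/\partial x_k)$ yields the backward recursion
\[
\lambda_k = \left(I - h\,\frac{\partial \Acal(\theta, x_k, t_k)}{\partial x_k}\right)^{\!\top} \lambda_{k+1},
\]
which is precisely the adjoint-propagation step. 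Hence at loop termination $\lambda = \lambda_0 = (\partial x_N/\partial x_0)^\top \nabla J(x_N)$, which is $S_{x_0}$ up to the standard identification of a gradient with its transpose.

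For $S_\theta$, the key step is to separate the \emph{total} dependence of $x_N$ on $\theta$ (the quantity the proposition denotes $\partial x_N/\partial\theta$, which accounts for the dependence routed through every intermediate state) into explicit per-step contributions. Writing the explicit $\theta$-derivative of $G_k$ with $x_k$ held fixed as $-h\,\partial\Acal(\theta,x_k,t_k)/\partial\theta$, the unrolled total-derivative chain rule gives
\[
\frac{\partial x_N}{\partial \theta} = \sum_{k=0}^{N-1} \frac{\partial x_N}{\partial x_{k+1}}\left(-h\,\frac{\partial \Acal(\theta, x_k, t_k)}{\partial \theta}\right).
\]
Contracting on the left with $\nabla J(x_N)^\top$ and recognizing $\nabla J(x_N)^\top (\partial x_N/\partial x_{k+1}) = \lambda_{k+1}^\top$ turns the target into the closed form
\[
S_\theta = -h\sum_{k=0}^{N-1}\left(\frac{\partial \Acal(\theta, x_k, t_k)}{\partial \theta}\right)^{\!\top} \lambda_{k+1}.
\]

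I would then close the argument with a reverse-induction loop invariant: at the start of the iteration for index $k$ the algorithm holds $\lambda = \lambda_{k+1}$ and $S_\theta = -h\sum_{j=k+1}^{N-1}(\partial\Acal(\theta,x_j,t_j)/\partial\theta)^\top\lambda_{j+1}$. The base case ($k = N-1$) is just the initialization with the empty sum, and the inductive step holds because the accumulation step appends the $j = k$ term using the currently stored $\lambda = \lambda_{k+1}$ \emph{before} the adjoint step advances $\lambda$ to $\lambda_k$. Evaluating the invariant at termination recovers the two closed forms above. The main obstacle, and the only genuinely delicate point, is the careful bookkeeping of total versus partial $\theta$-derivatives together with the loop indexing: one must confirm that the $\lambda$ entering each $S_\theta$ accumulation is $\lambda_{k+1}$ (the adjoint \emph{before} its update at step $k$) rather than $\lambda_k$, since an off-by-one here would silently corrupt the result. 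Everything else is a mechanical application of the chain rule and requires only that the Jacobians $\partial\Acal/\partial x$ and $\partial\Acal/\partial\theta$ exist, which is guaranteed by the smoothness of $\Acal$ assumed earlier.
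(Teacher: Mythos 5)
Your proposal is correct and follows essentially the same route as the paper's proof: the same adjoint definition $\lambda_k = (\partial x_N/\partial x_k)^\top \nabla J(x_N)$, the same backward chain-rule recursion through the one-step Jacobians $I - h\,\partial\Acal/\partial x_k$, and the same decomposition of the total $\theta$-derivative into explicit per-step contributions contracted against $\lambda_{k+1}$. Your explicit loop-invariant formulation of the final bookkeeping step is a slightly more formal rendering of what the paper states in prose, but it is not a different argument.
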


Once $S_{\theta}$ and $S_{x_0}$ are computed using Algorithm \ref{alg:discrete_adjoint_computation}, they are plugged into expression \eqref{eq:def-diff-stop-time}. This approach computes the required numerators efficiently by only requiring storage for the forward trajectory $\{x_k\}$ and the current adjoint vector $\lambda$, making its memory footprint $O(Nd + d)$, which is typically much smaller than $O(N \times \text{memory for } \Acal \text{ graph})$ needed for naive unrolling. The computational cost is roughly proportional to $N$ times the cost of evaluating $\Acal$ and its relevant partial derivatives (or VJPs).
The overall procedure to compute $\nabla_{\theta} N_J$ would first run the forward pass to find $N$ and store $\{x_k\}$, then call Algorithm~\ref{alg:discrete_adjoint_computation} to get $S_{\theta}$, and finally assemble the components using \eqref{eq:def-diff-stop-time}.

\section{Applications of Differentiable Stopping Time}
\label{sec:applications}
In this section, we explore two applications of the differentiable discrete stopping time: L2O and online adaptation of learning rates (or other optimizer parameters). The ability to differentiate $N_J$ allows us to directly optimize for algorithmic efficiency towards target suboptimality.

\subsection{L2O with Differentiable Stopping Time}
\label{sec:l2o_application}
In L2O, the objective is to learn an optimization algorithm, denoted by \eqref{eq:iter_form_general}, parameterized by $\theta$, that performs efficiently across a distribution of optimization tasks. Traditional L2O approaches often aim to minimize a sum of objective function values over a predetermined number of steps. While this provides a dense reward signal, it may not directly optimize for the speed to reach a specific target precision $\varepsilon$. To overcome this limitation, the L2O training objective can be augmented with the stopping time
\begin{equation}
\label{eq:l2o_objective}
    \min_{\theta} \quad \mathcal{L}(\theta) = \E_{f \sim \mathcal{D}_f, x_0 \sim \mathcal{D}_{x_0}} \left[ \sum_{k=0}^{K_{\text{max}}} w_k f(x_k) + \lambda N_J(\theta, x_0, \varepsilon) \right],
\end{equation}
where $\mathcal{D}_f,\mathcal{D}_{x_0}$ are distributions of $f$ and $x_0$, respectively, $K_{\text{max}}$ is a maximum horizon for the sum, $J$ is a stopping criterion depending on $f$, $w_k$ are weights, $\lambda$ is a balancing hyperparameter, and $N_J(\theta, x_0, \varepsilon)$ is the discrete stopping time. The parameters $\theta$ are then updated using a stochastic optimization method such as stochastic gradient descent or Adam. The update follows the rule
\begin{equation}
\theta_{\text{new}} = \theta_{\text{old}} - \eta_{\text{L2O}} \left( \nabla_{\theta} \left[\sum_{k=0}^{K_{\text{max}}} w_k f(x_k)\right] + \lambda \nabla_{\theta} N_J(\theta, x_0, \varepsilon) \right),
\end{equation}
where $\eta_{\text{L2O}}$ is the meta-learning rate. Combining these two losses contributions provides a richer training signal that values both the quality of the optimization path and the overall convergence speed.

Suppose $f(x_k)>f(x_{k+1})$ holds for all $k$, another interesting result comes from the identity
\begin{equation}
\label{eq:identity}
\begin{aligned}
\frac{\mathrm{d}}{\mathrm{d} \theta} \sum_{k=0}^{K_{\max}} f(x_{k}) =&\sum_{k=0}^{K_{\max}}(f(x_{k})-f(x_{k-1}))\frac{\nabla f(x_{k})\partial x_{k}/\partial \theta}{f(x_{k})-f(x_{k-1})}\\
=&\frac{\partial}{\partial\theta}\sum_{k=0}^{K_{\max}}(f(x_{k-1})-f(x_{k}))N_{f}(\theta,x_{k-1},f(x_{k})).\\
\end{aligned}
\end{equation}
We emphasize that the operator $\partial /\partial \theta$ directly applies to the variable $\theta$ while ${\mathrm{d}}/{\mathrm{d} \theta}$ will unroll the intermediate variable and apply chain rule. The identity \eqref{eq:identity} reveals that optimizing the weighted loss sum with $w_k\equiv 1$ equals to minimize the sum of stopping times greedily with stopping criterion $f$ and natural weights $f(x_{k-1})-f(x_{k})$.

\subsection{Online Adaptation of Optimizer Parameters via Stopping Time}
\label{sec:online_adaptation_stopping_time_general}
Online adaptation of optimizer hyperparameters $\theta_k$ (for $x_{k+1} = G(x_k, \theta_k)$) can be triggered by an adaptive criterion $\varphi(N, \varepsilon)$. This criterion, potentially adaptive itself, signals when to update $\theta_k$. $N$ is a stopping time from a reference $x_{\text{ref}}$ (last adaptation point or $x_0$) until $\varphi$ is met at $x_{\text{current}}$. Upon meeting $\varphi$ at $x_{k+1} (=x_{\text{current}})$, the sensitivity $\partial N / \partial \theta$ of the stopping time $N$ to hyperparameters $\theta$ active within $[x_{\text{ref}}, x_{k+1}]$ is key. Theoretically, $\partial N / \partial \theta$ is found by backpropagating through all steps from $x_{k+1}$ to $x_{\text{ref}}$, yielding a principled multi-step signal for adjusting $\theta$.

\begin{figure}[H]
    \centering
    \begin{minipage}[t]{0.5\textwidth} 
        \vspace{0pt} 
         Calculating the full multi-step $\partial N / \partial \theta$ to $x_{\text{ref}}$ is often costly. Practical methods may truncate this dependency. The simplest truncation considers only the immediate impact of $\theta_k$ on $x_{k+1}$. For this single-step proxy $N_k$, its sensitivity, given $x_{k+1} = x_k - h_{\text{step}}\mathcal{A}(\theta_k, x_k, t_k)$ and decreasing $J(x)$, is
        \begin{equation}
            \frac{\partial N_k}{\partial \theta} = \frac{h_{\text{step}} \nabla J(x_{k+1})^\top (\partial \mathcal{A}(\theta_k, x_k, t_k) / \partial \theta)}{J(x_{k+1}) - J(x_k)}.
            \label{eq:dN_dtheta_general_A}
        \end{equation}
        For Adam's learning rate $\alpha_k$ (where $x_{k+1} = x_k - \alpha_k d_k$, $\mathcal{A} = \alpha_k d_k$, $h_{\text{step}}=1$, $\theta_k=\alpha_k$, and $\partial \mathcal{A} / \partial \alpha_k = d_k$), the one-step truncated sensitivity $S_k$ from \eqref{eq:dN_dtheta_general_A} (with $J(x)=f(x)$) becomes
        \begin{equation}
            S_k = \frac{\nabla f(x_{k+1})^\top d_k}{f(x_{k+1}) - f(x_k)}.
            \label{eq:Sk_definition_code_aligned}
        \end{equation}
        $S_k$ is the practical signal for adjusting $\alpha_k$. If $f(x_{k+1}) < f(x_k)$ (negative denominator), $\alpha_k$ is updated by
        \begin{equation}
            \alpha_{k+1} = \alpha_k - \eta_{\text{adapt}} S_k,
            \label{eq:alpha_update_code_aligned}
        \end{equation}
        with $\eta_{\text{adapt}}$ as the adaptation rate. Specifically, if $\nabla f(x_{k+1})^\top d_k > 0$, then $S_k < 0$, increasing $\alpha_k$; if $\nabla f(x_{k+1})^\top d_k < 0$, then $S_k > 0$, decreasing $\alpha_k$. This behavior's empirical efficacy requires validation. Algorithm \ref{alg:adam_ola} presents Adam with Online LR Adaptation (Adam-OLA).
    \end{minipage}
    \hfill 
    \begin{minipage}[t]{0.49\textwidth} 
        \vspace{-10pt} 
        \begin{algorithm}[H]
            \caption{Adam-OLA}
            \label{alg:adam_ola}
            \begin{algorithmic}[1]
                \State \textbf{Input:} $x_0$, $\alpha_0$, $f$, $\nabla f$.
                \State \textbf{Params:}$\beta_1, \beta_2, \varepsilon_{\text{stab}},\eta_{\text{adapt}}, \epsilon_{\text{desc}}$.
                \State $m_0, v_0 \gets 0, 0$; $k \gets 0$; $\alpha_{\text{curr}} \gets \alpha_0$.
                \State $x_{\text{ref}} \gets x_0$; $f_{\text{ref}} \gets f(x_0)$; $N_{\text{updates}} \gets 0$.
                \For{$k = 0, 1, \dots$ until convergence}
                    \State $g_k \gets \nabla f(x_k)$
                    \State $m_{k+1} \gets \beta_1 m_k + (1-\beta_1) g_k$
                    \State $v_{k+1} \gets \beta_2 v_k + (1-\beta_2) g_k^2$
                    \State $\hat{m}_{k+1} \gets m_{k+1}/(1-\beta_1^{k+1})$
                    \State $\hat{v}_{k+1} \gets v_{k+1}/(1-\beta_2^{k+1})$
                    \State $d_k \gets \hat{m}_{k+1} / (\sqrt{\hat{v}_{k+1}} + \varepsilon_{\text{stab}})$
                    \State $f_k^{\text{prev}} \gets f(x_k)$
                    \State $x_{k+1} \gets x_k - \alpha_{\text{curr}} d_k$
                    \State $f_{k+1} \gets f(x_{k+1})$
                    \If{$f_{\text{ref}} - f_{k+1} > \epsilon_{\text{desc}} \cdot N_{\text{updates}}$} \label{algoline:adapt_cond_varphi_concise}
                        \State $g_{k+1}^{\text{new}} \gets \nabla f(x_{k+1})$
                        \State $\Delta f_{\text{step}} \gets f_{k+1} - f_k^{\text{prev}}$
                        \If{$\Delta f_{\text{step}} \neq 0$}
                            \State $S_k \gets (g_{k+1}^{\text{new}} \cdot d_k) / \Delta f_{\text{step}}$
                            \State $\alpha_{\text{curr}} \gets \alpha_{\text{curr}} - \eta_{\text{adapt}} S_k$
                        \EndIf
                        \State $x_{\text{ref}} \gets x_{k+1}$; $f_{\text{ref}} \gets f_{k+1}$
                        \State $N_{\text{updates}} \gets N_{\text{updates}} + 1$
                    \EndIf
                \EndFor
                \State \textbf{Return} $x_k$
            \end{algorithmic}
        \end{algorithm}
    \end{minipage}
\end{figure}

\section{Experiments}
\label{sec:experiments}

\textbf{Validation of Theorems \ref{thm:approx-error} and Proposition \ref{thm:adjoint_correctness}.}
To validate the effectiveness and efficiency of our differentiable discrete stopping time approach, we conduct experiments on a high-dimensional quadratic optimization problem. We minimize $f(x) = x^\top Q x/2, x\in\mathbb{R}^d$ with $d \in \{10^2, 10^3, 10^4\}$ and condition number 100. The optimization algorithm uses forward Euler discretization \eqref{eq:iter_form_general} of \eqref{eq:ode_general_form}, where $\mathcal{A}$ incorporates a diagonal preconditioner \eqref{eq:preconditioner} with $10d$ learnable parameters. The stopping criterion is $\|\nabla f(x)\|_2^2 \le \varepsilon$ with $\varepsilon \in \{10^{-3}, 10^{-4}, 10^{-5}\}$. We compare the sensitivity of the discrete stopping time $\nabla_{\theta} N_J$, computed using Algorithm \ref{alg:discrete_adjoint_computation}, against the gradient of the continuous stopping time $\nabla_{\theta} T_J$ (ground truth), computed via \texttt{torchdiffeq} \cite{chen2018neural} through an adaptive ODE solver. We vary $d$, $\varepsilon$, and $h$.

We evaluate effectiveness and efficiency using two primary metrics, \emph{Relative Error} quantifies the accuracy of $\nabla_{\theta} N_J$ as an approximation of $\nabla_{\theta} T_J$. A smaller error indicates better accuracy, with $\mathcal{O}(h)$ magnitude expected (Theorem \ref{thm:approx-error}). Results are shown in Figure~\ref{fig:error}. \emph{NFE Ratio} measures the computational cost efficiency, defined as the number of function evaluations (NFE) for Algorithm \ref{alg:discrete_adjoint_computation} to compute $\nabla_{\theta} N_J$ versus the adaptive ODE solver to compute $\nabla_{\theta} T_J$. A ratio $<1$ indicates the discrete approach's forward simulation is cheaper. Results are shown in Figure~\ref{fig:ratio}. The numbers of Euler NFE and ODE NFE are presented in Appendix \ref{app:examples}. The math formulae of these quantities are
\[
\text{Relative Error} = \frac{\|\nabla_{\theta} N_J - \nabla_{\theta} T_J\|_2}{\|\nabla_{\theta} T_J\|_2 + \|\nabla_{\theta} N_J\|_2},\qquad
\text{NFE Ratio} = \frac{\text{Euler NFE}}{\text{ODE NFE}}.
\]
By analyzing the relative error and NFE ratio across varying parameters, our experiments demonstrate that the discrete sensitivity provides an accurate approximation while requiring substantially fewer function evaluations for the forward pass, highlighting its efficiency and suitability for high-dimensional problems compared to methods relying on precise ODE solves for the stopping time gradient.

\begin{figure}[htbp]
    \centering
    \begin{subfigure}[b]{0.49\textwidth}
        \centering
        \includegraphics[width=\linewidth]{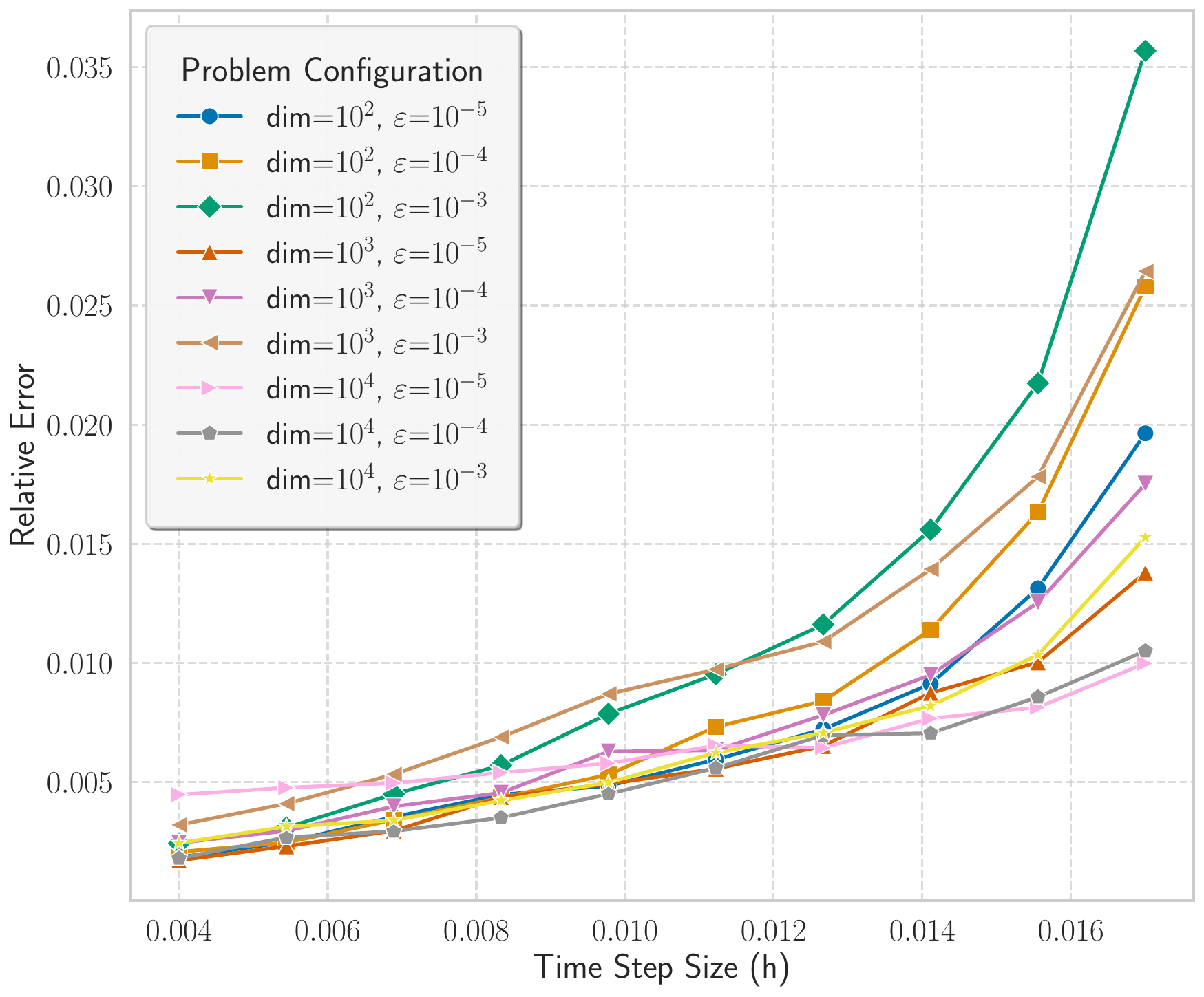}
        \caption{Relative Error vs Time Step Size}
        \label{fig:error}
    \end{subfigure}
    \hfill
    \begin{subfigure}[b]{0.49\textwidth}
        \centering
        \includegraphics[width=\linewidth]{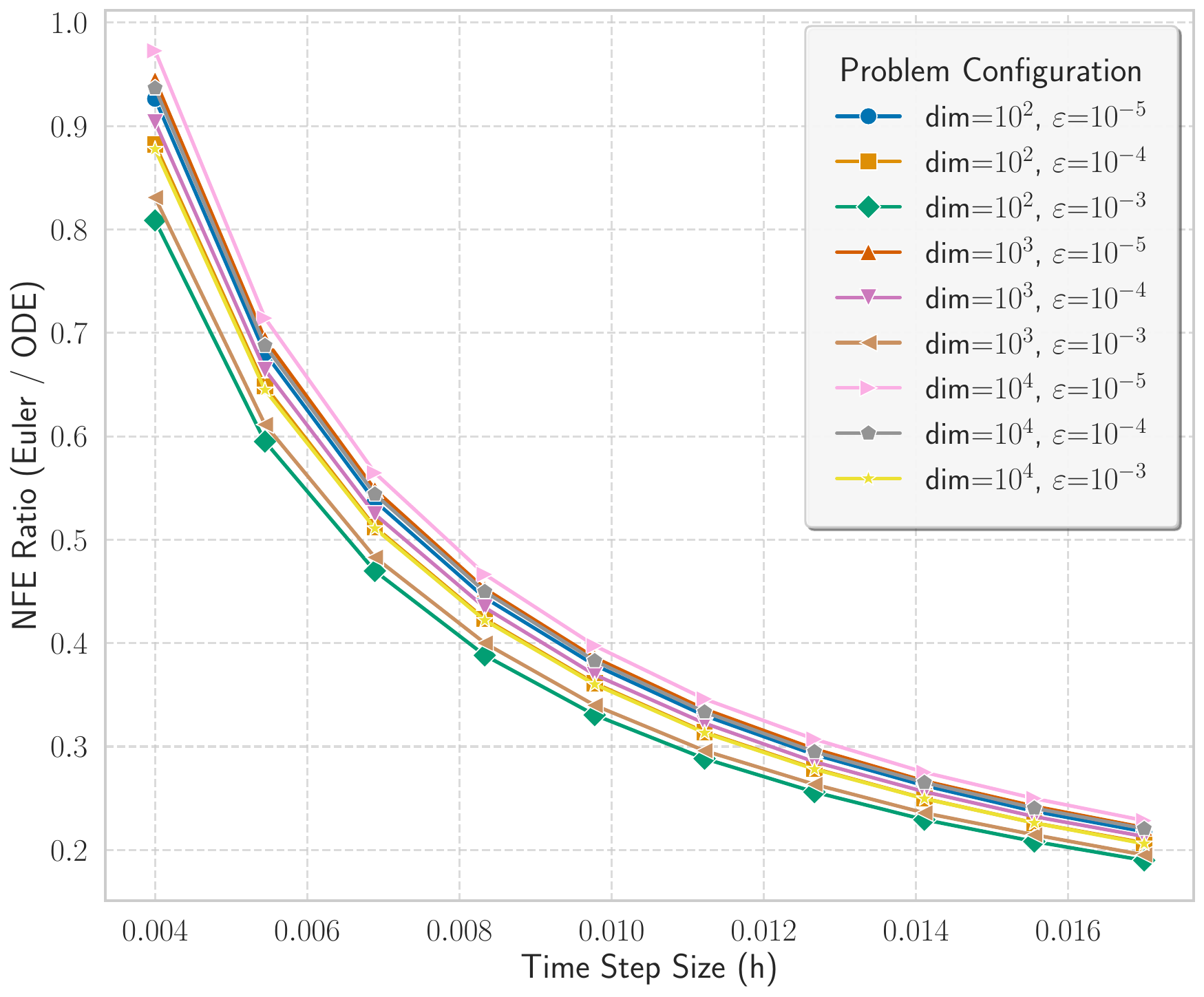}
        \caption{NFE Ratio vs Time Step Size}
        \label{fig:ratio}
    \end{subfigure}
    \caption{Experimental results comparing the discrete and continuous stopping time gradients across varying problem dimensions, stopping thresholds, and time step sizes. (a) shows the relative error of the discrete gradient approximation. (b) shows the computational cost ratio.}
    \label{fig:combined_results} 
\end{figure}

\textbf{Learning to Optimize.}
We consider a logistic regression problem with synthetic data
$$
\min_{x \in \mathbb{R}^d} \;f(x)\colon= \frac{1}{n} \sum_{i=1}^n \log\left(1 + \exp(-y_i w_i^\top x)\right),
$$
where $w_i \in \mathbb{R}^d$ denotes the $i$-th data sample, and $y_i \in \{0,1\}$ is the corresponding label. We consider two L2O optimizers: L2O-DM~\cite{l2o-dm} and L2O-RNNprop~\cite{rnnprop}. Both employ a two-layer LSTM with a hidden state size of 30 to predict coordinate-wise updates. The data generation process and the architecture of the L2O optimizers are detailed in Appendix~\ref{app:examples}. The training setup follows that of~\cite{rnnprop}. Specifically, the feature dimension is set to $d = 512$, and the number of samples is $n = 256$. In each training step, we use a mini-batch consisting of 64 optimization problems. The total number of training steps is 500, and the optimizers are allowed to run for at most $K_{\max} = 100$ steps. We divide the sequence into 5 segments of 20 steps each and apply truncated backpropagation through time (BPTT) for training. The weights in \eqref{eq:l2o_objective} are set as $w_k \equiv 1 / K_{\max}$. Two loss functions are considered. The first corresponds to setting $\lambda = 0$ in \eqref{eq:l2o_objective}, resulting in an average loss across all iterations. To demonstrate the benefit of incorporating the stopping time penalty, we also set $\lambda = 1$ and use the stopping criterion $f(x_{k-1}) - f(x_k) \leq 10^{-5}$. This can be reformulated into the standard form by augmenting the state variable as $z_k = (x_k, x_{k-1})$ and defining $J(z) = f(z[d+1:2d]) - f(z[1:d])$.

The test results are summarized in Figure~\ref{fig:l2o}. \texttt{L2O-DM} refers to the L2O-DM optimizer. \texttt{L2O-RNNprop} and \texttt{L2O-RNNprop-Time} denote the L2O-RNNprop optimizer with and without the stopping time penalty, respectively. Since L2O-DM does not reach the stopping criterion within the maximum number of steps, we do not evaluate its performance under the stopping time penalty. For comparison with manually designed optimizers, \texttt{GD} represents gradient descent, \texttt{NAG} denotes Nesterov's accelerated gradient method, and \texttt{Adam} is a well-known adaptive optimizer. All classical methods use a fixed step size of $1/L$, where $L$ is the Lipschitz constant of $\nabla f(x)$ estimated at the initial point $x_0$. Our results show a clear acceleration toward reaching the target stopping criterion. In Figure~\ref{fig:logistic_syn}, we evaluate on a problem of the same size, $d = 512$, $n = 256$. In Figure~\ref{fig:logistic_syn_large}, we test on a fourfold larger instance with $d = 2048$, $n = 1024$. Both experiments indicate that the number of iterations required to meet the stopping criterion is reduced by hundreds of steps, and the learned optimizers generalize robustly to larger-scale problems.

\begin{figure}[htbp]
    \centering
    \begin{subfigure}[b]{0.49\textwidth}
        \centering
        \includegraphics[width=\linewidth]{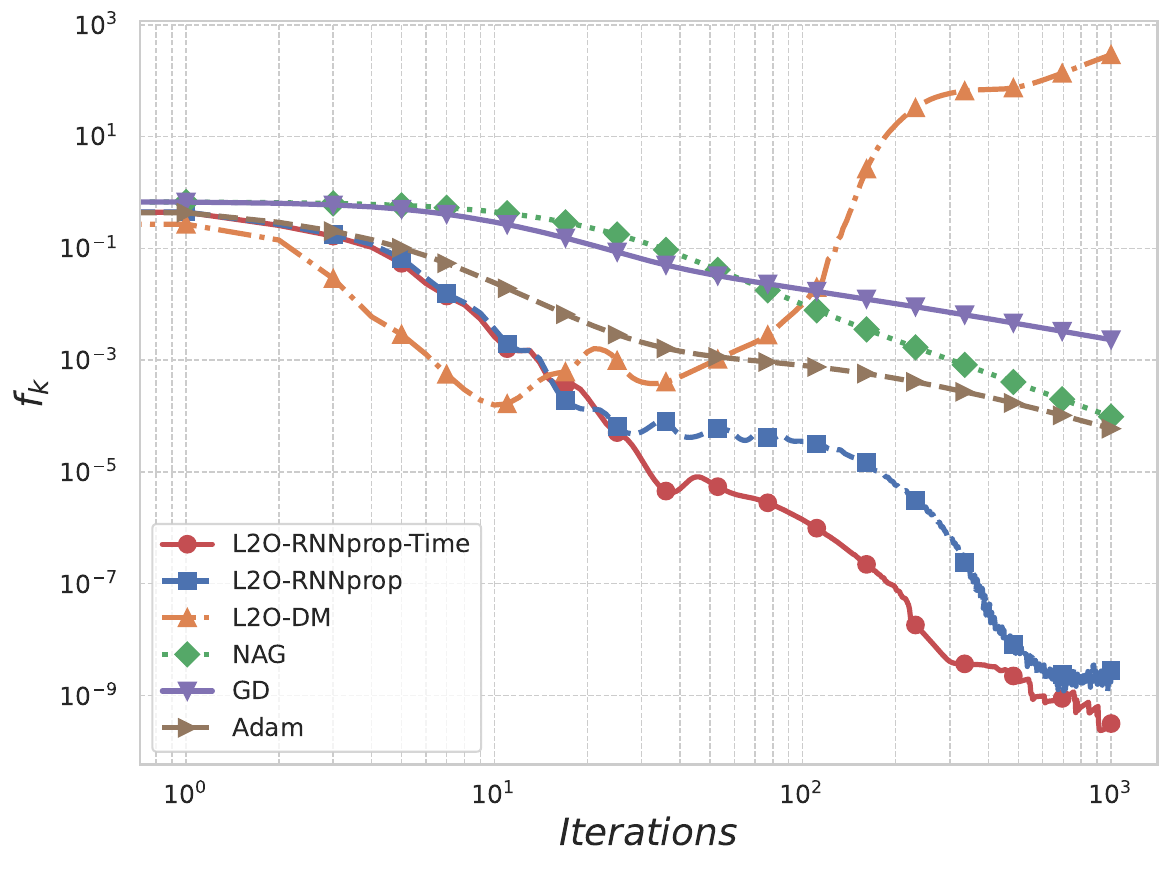}
        \caption{Train and test on the same size problems.}
        \label{fig:logistic_syn}
    \end{subfigure}
    \hfill
    \begin{subfigure}[b]{0.49\textwidth}
        \centering
        \includegraphics[width=\linewidth]{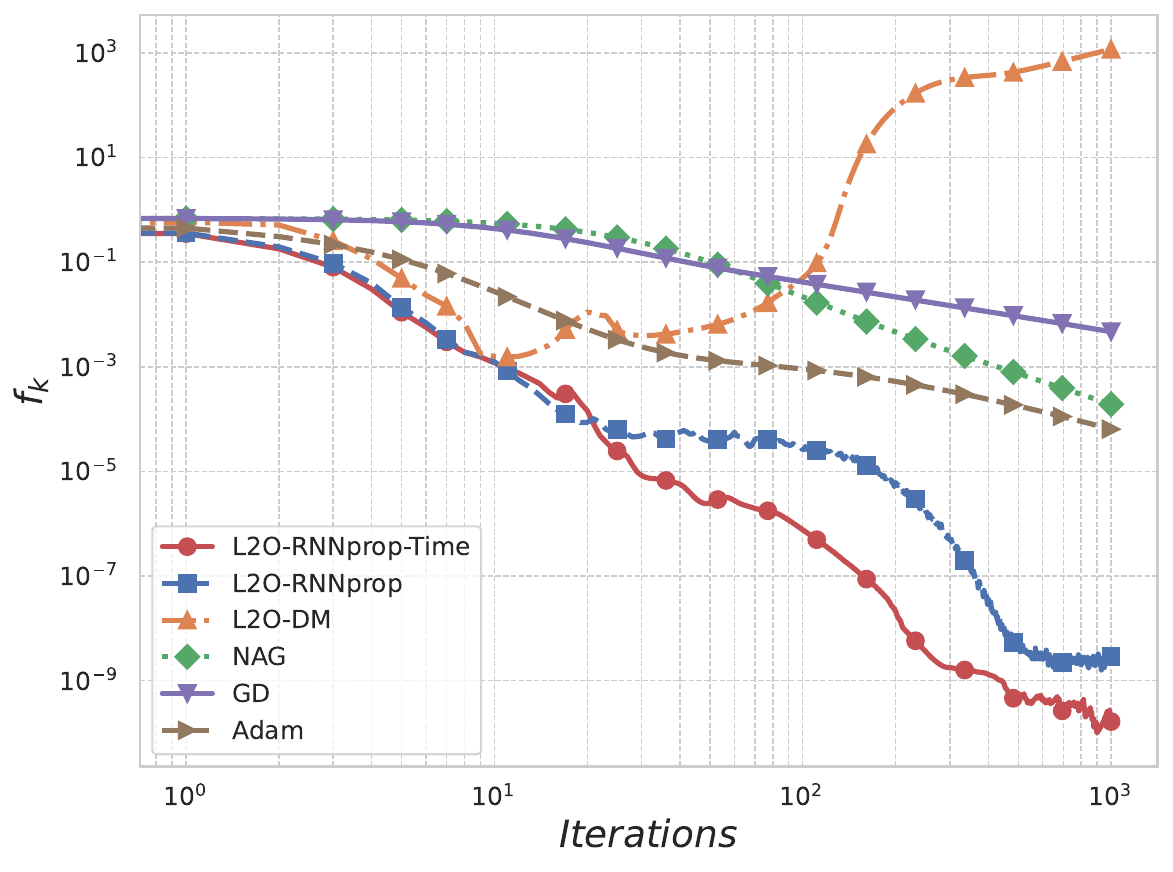}
        \caption{Test on 4x larger problems than training.}
        \label{fig:logistic_syn_large}
    \end{subfigure}
    \caption{Test results of different optimizers on logistic regression: Function value versus iteration.}
    \label{fig:l2o}
\end{figure}

\textbf{Online Learning Rate Adaptation.}
We tested Algorithm~\ref{alg:adam_ola} on smooth support vector machine (SVM) problems~\cite{ssvm}, using datasets from LIBSVM~\cite{libsvm}. \texttt{HB} denotes the heavy-ball method, and \texttt{NAG-SC} refers to the Nesterov accelerated gradient method tailored for strongly convex objectives. \texttt{Adagrad} is an adaptive gradient algorithm that scales the learning rate per coordinate based on historical gradient information. \texttt{Adam-HD} is an influential extension of Adam~\cite{atilim2018online} in the context of online learning rate adaptation; it updates the base learning rate of \texttt{Adam} at each iteration using a hyper-gradient technique. The remaining abbreviations retain their previously defined meanings. The results presented in Figure~\ref{fig:ola} demonstrate that Algorithm~\ref{alg:adam_ola} consistently outperforms the baseline methods, particularly in the later stages of convergence. Further comparisons across multiple datasets, as well as detailed descriptions of hyperparameter settings for the baselines, are provided in Appendix~\ref{app:examples}.

\begin{figure}[htbp]
    \centering
    \begin{subfigure}[b]{0.49\textwidth}
        \centering
        \includegraphics[width=\linewidth]{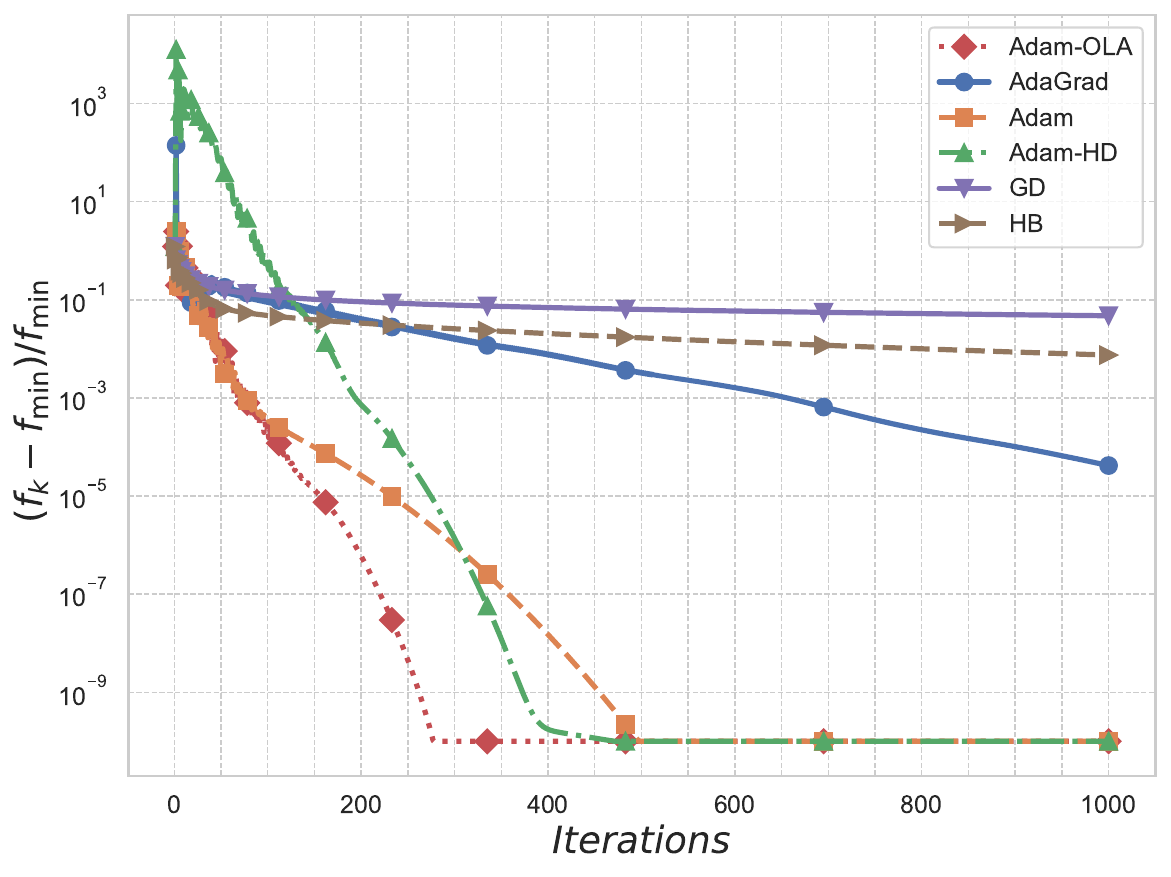}
        \caption{\texttt{a1a}}
        \label{fig:svm_a1a}
    \end{subfigure}
    \hfill
    \begin{subfigure}[b]{0.49\textwidth}
        \centering
        \includegraphics[width=\linewidth]{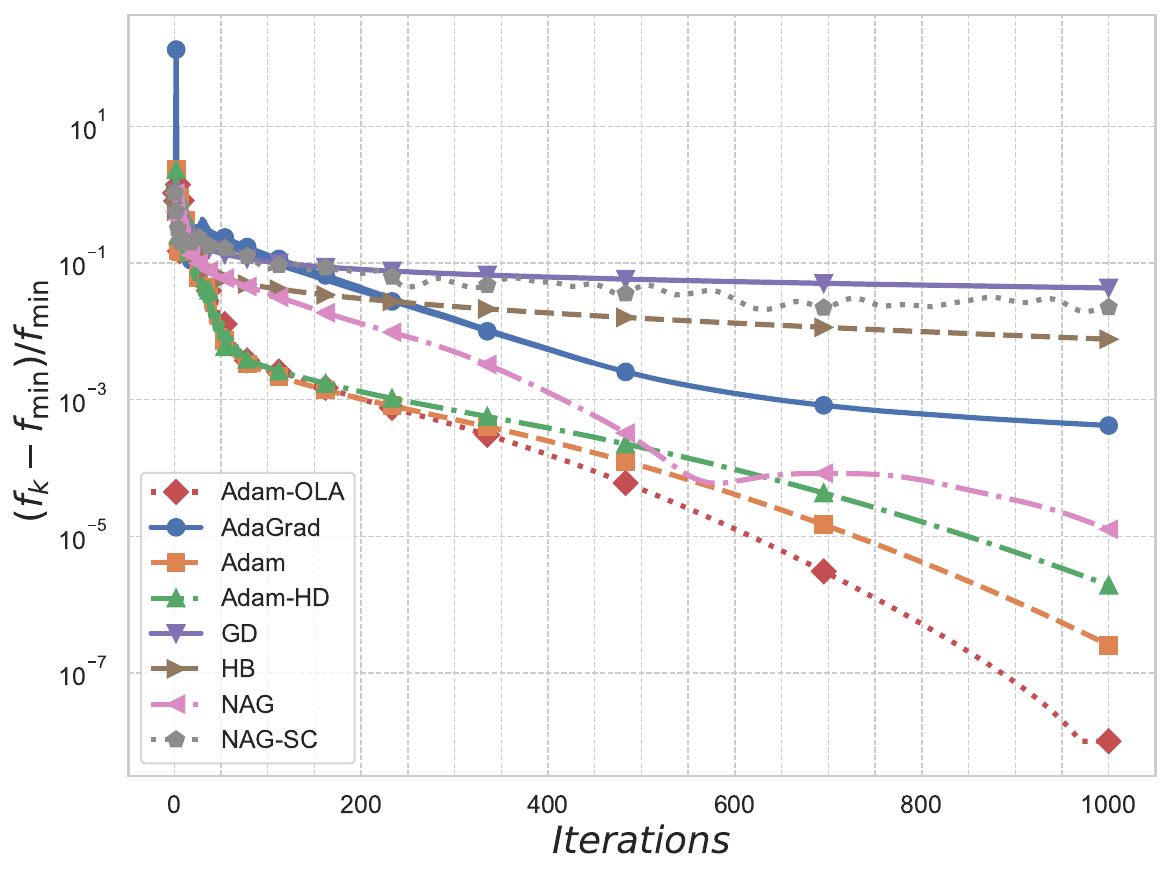}
        \caption{\texttt{a2a}}
        \label{fig:svm_a2a}
    \end{subfigure}
    \caption{Comparison of different optimizers on smooth SVM: Function value versus iteration. Here, $f_{\min}$ denotes the minimum function value achieved across all iterations for each optimizer.}
    \label{fig:ola}
\end{figure}

\section{Conclusion}
\label{sec:conclusion}

In this work, we introduced the concept of a differentiable discrete stopping time for iterative algorithms, establishing a link between continuous time dynamics and their discrete approximations. We proposed an efficient method using the discrete adjoint principle to compute the sensitivity of the discrete stopping time. Our experiments demonstrate that this approach provides an accurate gradient approximation while requiring substantially fewer function evaluations for the forward pass compared to methods relying on continuous ODE solves, proving efficient and scalable for high-dimensional problems. This allows direct optimization of algorithms for convergence speed, with potential applications in L2O and online adaptation.

However, we note that employing a forward Euler discretization with a fixed time step may be too coarse for the algorithmic design. This limitation is also reflected in the error bound estimated in Theorem \ref{thm:approx-error}. In future work, we plan to explore more tailored algorithmic designs for $\Acal$ alongside more sophisticated discretization schemes—such as symplectic integrators or methods that incorporate higher-order information. Such approaches may enable more accurate control of the global error and allow for a wider range of stable time steps during discretization.

\bibliographystyle{unsrtnat}
\bibliography{ref}

\newpage
\appendix
\section{Proof of Theorem \ref{thm:differentiability}}
\label{app:differentiability}
\begin{proof}
    Consider a function $ G(\theta, x_0, t) = J(x(t; \theta, x_0)) - \epsilon $. By the definition of $ T = T_J(\theta, x_0, \epsilon) $, we have
    \[
    G(\theta, x_0, T) = G(\theta, x_0, T_J(\theta, x_0, \epsilon)) = 0.
    \]
    Computing the partial derivatives yields
    \[
    \frac{\partial G}{\partial \theta} = \nabla J(x)^\top \frac{\partial x}{\partial \theta},
    \]
    and
    \[
    \frac{\partial G}{\partial t} = \nabla J(x)^\top \dot{x}(t).
    \]
    Using the implicit function theorem, we conclude that $ T $ can be expressed locally as a continuously differentiable function of $ \theta $ or $ x_0 $. We now differentiate $ G $ with respect to $ \theta $ and $ x_0 $, which yields
    \[
    0 = \frac{\dd}{\dd \theta} \left( G(\theta, x_0, T) \right)
    = \frac{\dd}{\dd \theta} J\left( x(T(\theta, x_0, \epsilon); \theta, x_0) \right)
    = \nabla J(x)^\top \left( \frac{\partial x}{\partial \theta} + \dot{x}(T) \frac{\partial T}{\partial \theta} \right),
    \]
    and
    \[
    0 = \frac{\dd}{\dd x_0} \left( G(\theta, x_0, T) \right)
    = \frac{\dd}{\dd x_0} J\left( x(T(\theta, x_0, \epsilon); \theta, x_0) \right)
    = \nabla J(x)^\top \left( \frac{\partial x}{\partial x_0} + \dot{x}(T) \frac{\partial T}{\partial x_0} \right).
    \]
    Rearranging these equations leads to
    \[
    \nabla J(x)^\top \dot{x}(T) \frac{\partial T}{\partial \theta}
    = -\nabla J(x)^\top \frac{\partial x}{\partial \theta},
    \quad \text{and hence} \quad
    \frac{\partial T}{\partial \theta}
    = \left( \nabla J(x)^\top \dot{x}(T) \right)^{-1} \nabla J(x)^\top \frac{\partial x}{\partial \theta},
    \]
    as well as
    \[
    \nabla J(x)^\top \dot{x}(T) \frac{\partial T}{\partial x_0}
    = -\nabla J(x)^\top \frac{\partial x}{\partial x_0},
    \quad \text{and hence} \quad
    \frac{\partial T}{\partial x_0}
    = \left( \nabla J(x)^\top \dot{x}(T) \right)^{-1} \nabla J(x)^\top \frac{\partial x}{\partial x_0}.
    \]
    The above equations complete the proof. 
\end{proof}

\section{Proof of Theorem \ref{thm:approx-error}}
\label{app:approx-proof}

We first present a basic analysis in numerical ODEs. 

\begin{proposition}[Error analysis of  the forward Euler method]
\label{prop:forward_Euler_error}
Let $f:\mathbb{R}^n\times\mathbb{R}\rightarrow \mathbb{R}$ be a function defined by $(x,t)\mapsto f(x,t)$. Suppose the following assumptions hold
\begin{enumerate}
    \item There exists a constant $ L_x > 0 $ such that $ \| f(x_1, t) - f(x_2, t) \| \leq L_x \| x_1 - x_2 \| $ for all $ x_1, x_2 $, and $ t $.
    \item There exists a constant $ L_t > 0 $ such that $ \| f(x, t_1) - f(x, t_2) \| \leq L_t | t_1 - t_2 | $ for all $ x, t_1 $, and $ t_2 $.
    \item There exists a constant $ M > 0 $ such that $ \| f(x, t) \| < M $ for all $ x $ and $ t $.
\end{enumerate}

Given an initial condition $ x(t_0) = x_0 $ and a fixed stepsize $ h $, we consider the sequence generated by the forward Euler method as
\[
x_{k+1} = x_k + h f(x_k, t_k), \quad t_k = t_0 + k h.
\]

Then, for any positive integer $k$, the error $e_k = x_k - x(t_k)$ satisfies
\[
\|e_k\| \leq \frac{h}{2} \left( M + \frac{L_t}{L_x} \right) \left( e^{L_x h k} - 1 \right).
\]
\end{proposition}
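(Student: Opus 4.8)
The plan is to establish a one-step recursion for the global error $e_k = x_k - x(t_k)$ and then close it with a discrete Grönwall argument. First I would write the exact solution in integral form over one step, $x(t_{k+1}) = x(t_k) + \int_{t_k}^{t_{k+1}} f(x(s), s)\, \dd s$, and subtract the Euler update $x_{k+1} = x_k + h f(x_k, t_k)$. Grouping the terms gives
\[
e_{k+1} = e_k + h\bigl(f(x_k, t_k) - f(x(t_k), t_k)\bigr) + \tau_k, \qquad \tau_k := \int_{t_k}^{t_{k+1}}\bigl(f(x(t_k), t_k) - f(x(s), s)\bigr)\, \dd s,
\]
where $\tau_k$ is the local truncation error. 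Assumption~1 (Lipschitz in $x$) bounds the middle term by $hL_x\|e_k\|$, which produces the linear amplification factor $(1 + hL_x)$.

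The heart of the argument is bounding $\tau_k$ using only the stated regularity, without invoking second derivatives. For $s \in [t_k, t_{k+1}]$ I would insert the intermediate point $f(x(s), t_k)$ and apply Assumptions~1 and 2 to get
\[
\|f(x(t_k), t_k) - f(x(s), s)\| \leq L_x \|x(t_k) - x(s)\| + L_t |s - t_k|.
\]
Since $\dot{x} = f$ and $\|f\| < M$ by Assumption~3, the exact trajectory is $M$-Lipschitz in time, so $\|x(t_k) - x(s)\| \leq M|s - t_k|$. Substituting this and integrating $(s - t_k)$ over $[t_k, t_{k+1}]$ yields $\|\tau_k\| \leq \tfrac{h^2}{2}(L_x M + L_t)$. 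This is the one place where the three assumptions must be combined carefully to reproduce exactly the constant appearing in the claim, and I expect it to be the main (though still elementary) obstacle.

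Combining the two estimates, the error obeys the affine recursion $\|e_{k+1}\| \le (1 + hL_x)\|e_k\| + \tfrac{h^2}{2}(L_x M + L_t)$ with $e_0 = 0$. Finally I would unroll this recursion as a geometric sum,
\[
\|e_k\| \leq \frac{h^2}{2}(L_x M + L_t) \sum_{j=0}^{k-1}(1 + hL_x)^j = \frac{h^2}{2}(L_x M + L_t)\,\frac{(1 + hL_x)^k - 1}{hL_x},
\]
simplify the prefactor to $\tfrac{h}{2}(M + L_t/L_x)$, and apply the elementary inequality $1 + hL_x \le e^{hL_x}$ to replace $(1 + hL_x)^k$ by $e^{L_x h k}$, which gives the stated bound. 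The only remaining care is that the exact solution exist and remain in the region where the hypotheses hold on $[t_0, t_k]$; under the global Lipschitz and boundedness assumptions this is automatic via Picard--Lindel\"of.
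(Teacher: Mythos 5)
Your proposal is correct and follows essentially the same route as the paper's proof: the same error decomposition into an amplification term plus a local truncation error, the same splitting of the truncation integral via an intermediate point to use the $x$- and $t$-Lipschitz constants separately, and the same use of $\|\dot{x}\| \le M$ to bound the trajectory's variation, yielding the identical recursion $\|e_{k+1}\| \le (1+hL_x)\|e_k\| + \tfrac{h^2}{2}(L_xM + L_t)$. The only difference is cosmetic: you spell out the geometric-sum unrolling and the inequality $1+hL_x \le e^{hL_x}$, which the paper leaves implicit in its final step.
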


\begin{proof}
    We begin by expressing the error at step $ k+1 $ as
    \[
    e_{k+1} = x_{k+1} - x(t_{k+1}) = e_k + h \left( f(x_k, t_k) - f(x(t_k), t_k) \right) + x(t_k) + h f(x(t_k), t_k) - x(t_{k+1}).
    \]

    Applying Lipschitz continuity, we obtain the inequality
    \[
    \|e_{k+1}\| \leq (1 + L_x h) \|e_k\| + \|x(t_k) + h f(x(t_k), t_k) - x(t_{k+1})\|.
    \]

    The second term on the right-hand side can be expressed in integral form as
    \[
    \|x(t_k) + h f(x(t_k), t_k) - x(t_{k+1})\| = \left\| \int_{t_k}^{t_{k+1}} \left[ f(x(t), t) - f(x(t_k), t_k) \right] dt \right\|,
    \]
    which is bounded above by
    \[
    \left\| \int_{t_k}^{t_{k+1}} \left[ f(x(t), t_k) - f(x(t_k), t_k) \right] dt \right\| + \left\| \int_{t_k}^{t_{k+1}} \left[ f(x(t), t) - f(x(t), t_k) \right] dt \right\|.
    \]

    Substituting the assumptions, we estimate the first integral as 
    \[
    \left\| \int_{t_k}^{t_{k+1}} \left[ f(x(t), t_k) - f(x(t_k), t_k) \right] dt \right\| \leq L_x \int_{t_k}^{t_{k+1}} \|x(t) - x(t_k)\| dt \leq \frac{1}{2} M L_x h^2,
    \]
    where the last inequality follows from the Lagrange mean value theorem, which implies that 
    \begin{align*}
    \int_{t_k}^{t_{k+1}} \|x(t) - x(t_k)\| dt 
    &= \int_{t_k}^{t_{k+1}} \|\dot{x}(\xi)\| |t - t_k| dt \\
    &= \int_{t_k}^{t_{k+1}} \|f(x(\xi), \xi)\| |t - t_k| dt \\
    &\leq M \int_{t_k}^{t_{k+1}} |t - t_k| dt = \frac{1}{2} M h^2.
    \end{align*}

    Similarly, for the second integral, we derive the bound as
    \[
    \left\| \int_{t_k}^{t_{k+1}} \left[ f(x(t), t) - f(x(t), t_k) \right] dt \right\| \leq L_t \int_{t_k}^{t_{k+1}} |t - t_k| dt = \frac{1}{2} L_t h^2.
    \]

    Combining these inequalities, we obtain
    \[
    \|e_{k+1}\| \leq (1 + L_x h) \|e_k\| + \frac{h^2}{2} (L_t + M L_x).
    \]

    Finally, using the initial error $ e_0 = 0 $, we conclude that the global error satisfies 
    \[
    \|e_k\| \leq \frac{h}{2} \left( M + \frac{L_t}{L_x} \right) \left( e^{L_x h k} - 1 \right).
    \]
    This completes the proof.
\end{proof}

\begin{proof}[Proof of the Theorem]
    The following conditions are assumed throughout our analysis. First, the function $\mathcal{A}$ is twice continuously differentiable, i.e., $\mathcal{A} \in C^2$. Second, $\mathcal{A}$ itself, together with all partial derivatives of $\mathcal{A}$ (such as $\frac{\partial}{\partial x}\mathcal{A}$, $\frac{\partial^2\mathcal{A}}{\partial \theta\partial t}$) and the gradient and Hessian of $J$ (i.e., $\nabla J$ and $\nabla^2 J$), are uniformly bounded by constants $A, A_x, A_{\theta}, A_t, A_{\theta,x}, A_{x,x}, A_{x,t}, A_{\theta,t}, J_1,$ and $J_2$, respectively. Third, we assume the boundary condition $|\nabla J(x(T))^\top\dot{x}(T)| = \delta$.
    
    For clarity and brevity, our main theorem states the regularity and boundedness assumptions using Sobolev norms; specifically, we require that $\mathcal{A}(\theta,x(t),t)$, regarded as a function of $(\theta, t)$, has uniformly bounded $W^{2,\infty}$ norms with respect to $(\theta, t)$ in a neighborhood of $\theta \times [t_0,\; t_0 + N_J h]$, and that $J$, regarded as a function of $x$, has a uniformly bounded $W^{2,\infty}$ norm in a neighborhood of $x(T_J)$. In this proof, we equivalently expand these assumptions by explicitly introducing uniform bounds for $\mathcal{A}$, its partial derivatives (with respect to $x$, $\theta$, $t$, etc.), and for the gradient $\nabla J$ and Hessian $\nabla^2 J$, denoted by $A, A_x, A_{\theta}$, $A_t$, $A_{\theta,x}$, $A_{x,x}$, $A_{x,t}$, $A_{\theta,t}$, $J_1$, and $J_2$, respectively. This explicit formulation is purely for notational convenience in the analysis, as it allows us to refer directly to these quantities in the derivations, especially during Taylor expansions and error estimates. We emphasize that these detailed bounds can be derived from the $W^{2,\infty}$ norm boundedness assumed in the theorem statement.
    
    Without loss of generality, we only prove the case for the $L^2$ norm. We first recall the form and the definition of the derivative. They are given by
    \[
    \nabla_{\theta} T = \nabla_{\theta}T_J(\theta,x_0,\epsilon)=
    -\frac{\nabla J(x(T))^\top\frac{\partial x(T)}{\partial \theta}}{\nabla J(x(T))^\top\dot{x}(T)},
    \]\[
    \nabla_{\theta}N=
    \nabla_{\theta}N_J(\theta,x_0,\epsilon)=
    -\frac{h\nabla J(x_N)^\top\frac{\partial x_N}{\partial \theta}}{J(x_N)-J(x_{N-1})}.
    \]

    We consider the iteration 
    \[
    x_{k+1} = x_k - h\mathcal{A}(\theta,x_k,t_k).
    \]

    Differentiating with respect to $\theta$, we obtain 
    \[
        \frac{\partial x_{k+1}}{\partial \theta} = 
        \left( I - h\frac{\partial}{\partial x}\mathcal{A}(\theta,x_k,t_k) \right)\frac{\partial x_k}{\partial \theta} - h\frac{\partial}{\partial \theta}\mathcal{A}(\theta,x_k,t_k),
    \]
    where the initial condition $\frac{\partial x_0}{\partial \theta}$ holds.

    Also, we consider the flow 
    \[
    \dot{x}(t) = -\mathcal{A}(\theta,x(t),t).
    \]
    Differentiating with respect to $\theta$, we obtain
    \begin{equation}\label{eq:ODE_of_partial_theta}
        \frac{\dd }{\dd t}
        \frac{\partial x(t)}{\partial \theta} = -\frac{\partial}{\partial \theta}\mathcal{A}(\theta,x(t),t) - 
        \frac{\partial}{\partial x}\mathcal{A}(\theta,x(t),t)\frac{\partial x(t)}{\partial \theta}, 
    \end{equation}
    where the initial condition $\frac{\partial x(t_0)}{\partial \theta} = 0$ holds. 
    Let $u(t) = \frac{\partial x(t)}{\partial \theta}$. It is easy to observe that the iteration above corresponds to the forward Euler method for solving the ODE
    \[
    \frac{\dd}{\dd t}u(t) = 
    -\frac{\partial}{\partial \theta}\mathcal{A}(\theta,x(t),t) - 
        \frac{\partial}{\partial x}\mathcal{A}(\theta,x(t),t)u(t).  
    \]

    We now proceed to show that $u(t)$, for $t \in [t_0,T]$, is bounded by some constant $M > 0$. Let $v = u^\top u$, $B(t) = -\frac{\partial}{\partial \theta}\mathcal{A}(\theta,x(t),t)$, and $C(t) = -\frac{\partial}{\partial x}\mathcal{A}(\theta,x(t),t)$. Then we can derive that
    \[
        \frac{\dd}{\dd t}v = 2u^\top \frac{\dd}{\dd t}u = 2u^\top B + 2u^\top C u.
    \]
    Therefore, we have the bound
    \[
        \left|\frac{\dd}{\dd t}v\right| \leq 2\|B\|\sqrt{v} + 2\|C\|v \leq \|B\| + (\|B\| + 2\|C\|)v \leq A_{\theta} + (A_{\theta} + 2A_x)v.
    \]
    Applying the Gronwall inequality,  for every $t \in [t_0,T]$, we obtain the following estimate 
    \begin{equation}\label{eq:err_u}
        \|u(t)\| = \sqrt{v(t)} \leq \sqrt{\frac{A_{\theta}}{A_{\theta} + 2A_x} \left(e^{(A_{\theta} + 2A_x)(T - t_0)} - 1\right)} \triangleq M.
    \end{equation}

    Employing Proposition~\ref{prop:forward_Euler_error}, we obtain that $\left\|\frac{\partial x_N}{\partial \theta} - \frac{\partial x(Nh)}{\partial \theta}\right\|$ is bounded by 
    \[
        \frac{h}{2}
        \left(
        MA_x+A_{\theta}+
        \frac{M(A_{x,t}+AA_{x,x})+A_{\theta,t}+AA_{\theta,x}}{A_x}
        \right)
        \left(e^{A_x(T+1-t_0)} - 1\right).
    \]

    Let \begin{equation}\label{eq:def_c1}
        c_1 \triangleq\frac{1}{2}
        \left(
        MA_x+A_{\theta}+
        \frac{M(A_{x,t}+AA_{x,x})+A_{\theta,t}+AA_{\theta,x}}{A_x}
        \right)
        \left(e^{A_x(T+1-t_0)} - 1\right).
    \end{equation}

    Noticing that $\frac{\dd}{\dd t}\frac{\partial x(t)}{\partial \theta}$ is bounded by $A_{\theta}+A_xM$ according to \eqref{eq:ODE_of_partial_theta}, and that $|T - Nh| \leq h$, we deduce that
    \[
        \left\|\frac{\partial x(Nh)}{\partial \theta} - \frac{\partial x(T)}{\partial \theta}\right\| \leq (A_{\theta}+A_xM) h.
    \]
    Let $e_1 = \frac{\partial x_N}{\partial \theta} - \frac{\partial x(T)}{\partial \theta}$. Then we obtain the estimate
    \begin{equation}\label{eq:err_of_e1}
        \|e_1\| \leq (A_{\theta}+A_xM + c_1)h.
    \end{equation}

    Similarly, by Proposition~\ref{prop:forward_Euler_error}, we know that
    \[
        \|x_N - x(Nh)\| \leq \frac{h}{2} \left(A + \frac{A_t}{A_x} \right) \left(e^{A_x(T+1-t_0)} - 1\right).
    \]

    Let
    \begin{equation}\label{eq:def_c2}
        c_2\triangleq \frac{1}{2} \left(A + \frac{A_t}{A_x} \right) \left(e^{A_x(T+1-t_0)} - 1\right).
    \end{equation}

    Since $\frac{\dd}{\dd t}x(t) = -\mathcal{A}(\theta,x(t),t)$ is bounded by $A$ and $|T - Nh| \leq h$, it follows that
    \[
        \|x_N - x(T)\| \leq (A + c_2)h.
    \]
    Let $e_2 = \nabla J(x_N) - \nabla J(x(T))$. Since $\|\nabla^2 J\|$ is bounded by $J_2$, we obtain the estimate
    \begin{equation}\label{eq:err_of_e2}
        |e_2| \leq J_2 (A + c_2) h.
    \end{equation}

    The Taylor expansion yields
    \[
        J(x_N) = J(x_{N-1}) - \nabla J(x_{N-1})^\top(x_N - x_{N-1}) + \frac{1}{2}(x_N - x_{N-1})^\top \nabla^2 J(\xi)(x_N - x_{N-1}).
    \]
    Combining this with the fact that $x_N - x_{N-1} = -h \mathcal{A}(\theta,x_{N-1},t_{N-1})$ and that  $\|\nabla^2 J\|$ is bounded by $J_2$, we obtain
    \[
        \left| \frac{J(x_N) - J(x_{N-1})}{h} + \nabla J(x_{N-1})^\top \mathcal{A}(\theta,x_{N-1},t_{N-1}) \right| \leq \frac{1}{2} h A^2 J_2.
    \]

    Let 
    \[
    e_5 = \frac{J(x_N) - J(x_{N-1})}{h} + \nabla J(x_{N-1})^\top \mathcal{A}(\theta,x_{N-1},t_{N-1})
    \]and $e_3 = \nabla J(x_{N-1}) - \nabla J(x(T))$,$e_4 = \mathcal{A}(\theta,x_{N-1},t_{N-1}) + \dot{x}(T)$. We have just derived \begin{equation}
    \label{eq:err_of_e5}
        |e_5|\leq \frac12hA^2J_2
    \end{equation}
    
    As in the previous estimate for $e_2$, we obtain
    \begin{equation}\label{eq:err_of_e3}
        \|e_3\| \leq J_2 (A + c_2) h.
    \end{equation}
    Furthermore, we have
    \begin{align}
        \|e_4\| &\leq \|\mathcal{A}(\theta,x_{N-1},t_{N-1}) - \mathcal{A}(\theta,x(T),t_{N-1})\| + \|\mathcal{A}(\theta,x(T),t_{N-1}) - \mathcal{A}(\theta,x(T),T)\| \notag \\
        &\leq A_x (A + c_2) h + A_t h.\label{eq:err_of_e4}
    \end{align}

    Substituting the definitions of these error terms into the expression for $\nabla_{\theta} N$, we obtain
    \[
        \nabla_{\theta} N = -
        \frac{(\nabla J(x(T)) + e_2)^\top \left( \frac{\partial x(T)}{\partial \theta} + e_1 \right)}{(\nabla J(x(T)) + e_3)^\top (\dot{x}(T) - e_4)+e_5}.
    \]

    Recall that
    \[
        \nabla_{\theta} T = -\frac{\nabla J(x(T))^\top \frac{\partial x(T)}{\partial \theta}}{\nabla J(x(T))^\top \dot{x}(T)}.
    \]

    Comparing these two expressions and combining the estimates from \eqref{eq:err_of_e1}, \eqref{eq:err_of_e2}, \eqref{eq:err_of_e3}, and \eqref{eq:err_of_e4}, together with the assumptions, we arrive at the final estimate that
    \[
        \|\nabla_{\theta} T - \nabla_{\theta} N\| \leq R h + \mathcal{O}(h^2),
    \]
    where
    \begin{align*}
        R = \frac{J_1 M}{\delta^2}& \left( J_1 (A_t + A_x (A + c_2)) + \frac{3}{2} A^2 J_2 + A J_2 c_2 \right)\\
        &\qquad\qquad\qquad\qquad+ \frac{1}{\delta} \left( J_1 (A_0 + A_x M + c_1) + M J_2 (A + c_2) \right).
    \end{align*}
    Here, the constants refer to those defined in \eqref{eq:err_u}, \eqref{eq:def_c1}, \eqref{eq:def_c2}, and the assumptions stated earlier. This completes the proof. 
\end{proof}
\section{Proof of Proposition \ref{thm:adjoint_correctness}}
\label{app:proof-adjoint}

\begin{proof}
We aim to compute $S_{\theta_j} = \nabla J(x_N)^\top \frac{\partial x_N}{\partial \theta_j}$ for each component $\theta_j$ of $\theta$, and $S_{x_0} = \nabla J(x_N)^\top \frac{\partial x_N}{\partial x_0}$.
Let $L(\theta, x_0) = J(x_N(\theta, x_0))$. We are interested in $\nabla_{\theta} L$ and $\nabla_{x_0} L$.
Define the adjoint (co-state) vectors $\lambda_k \in \R^d$ for $k=0, \dots, N$ such that $\lambda_k^\top = \frac{\partial J(x_N)}{\partial x_k} = \nabla J(x_N)^\top \frac{\partial x_N}{\partial x_k}$.
The base case is at $k=N$,
\begin{equation}
    \lambda_N = \frac{\partial J(x_N)}{\partial x_N} = \nabla J(x_N).
    \label{eq:pf_adjoint_init}
\end{equation}

For $k < N$, $x_N$ depends on $x_k$ through $x_{k+1}$. Using the chain rule
\[ \frac{\partial J(x_N)}{\partial x_k} = \frac{\partial J(x_N)}{\partial x_{k+1}} \frac{\partial x_{k+1}}{\partial x_k}. \]
In terms of our adjoints, we have
\[ \lambda_k^\top = \lambda_{k+1}^\top \frac{\partial x_{k+1}}{\partial x_k}. \]
Given $x_{k+1} = x_k - h\Acal(\theta, x_k, t_k)$, the Jacobian is $\frac{\partial x_{k+1}}{\partial x_k} = I - h \frac{\partial \Acal(\theta, x_k, t_k)}{\partial x_k}$.
Thus, the backward recursion for the adjoints is
\begin{equation}
    \lambda_k^\top = \lambda_{k+1}^\top \left(I - h \frac{\partial \Acal(\theta, x_k, t_k)}{\partial x_k}\right),
    \label{eq:pf_adjoint_recursion}
\end{equation}
or $\lambda_k = \left(I - h \frac{\partial \Acal(\theta, x_k, t_k)}{\partial x_k}\right)^\top \lambda_{k+1}$.
The loop in Algorithm~\ref{alg:discrete_adjoint_computation} implements this recursion. At the beginning of iteration $k$ (loop index in algorithm, representing the step from $x_k$ to $x_{k+1}$), the variable $\lambda$ in the algorithm holds $\lambda_{k+1}$ from our derivation.

Now consider the derivative with respect to a parameter $\theta_j$. $J(x_N)$ depends on $\theta_j$ through all $x_m$ for $m \le N$ where $x_m$ is influenced by $\theta_j$. Hence,
\[ \frac{\partial J(x_N)}{\partial \theta_j} = \sum_{m=0}^{N-1} \frac{\partial J(x_N)}{\partial x_{m+1}} \left(\frac{\partial x_{m+1}}{\partial \theta_j}\right)_{\text{explicit}}, \]
where $(\partial x_{m+1}/\partial \theta_j)_{\text{explicit}}$ means differentiating $x_{m+1} = x_m - h\Acal(\theta, x_m, t_m)$ with respect to $\theta_j$ while holding $x_m$ fixed
\[ \left(\frac{\partial x_{m+1}}{\partial \theta_j}\right)_{\text{explicit}} = -h \frac{\partial \Acal(\theta, x_m, t_m)}{\partial \theta_j}. \]
Thus, it holds
\begin{equation}
    \frac{\partial J(x_N)}{\partial \theta_j} = \sum_{m=0}^{N-1} \lambda_{m+1}^\top \left(-h \frac{\partial \Acal(\theta, x_m, t_m)}{\partial \theta_j}\right).
    \label{eq:pf_grad_theta_sum}
\end{equation}
The loop runs from $k=N-1$ down to $0$. For each $k$ in the loop, the term added is $-h (\frac{\partial \Acal(\theta, x_k, t_k)}{\partial \theta})^\top \lambda_{k+1}$. Summing these terms gives $\left(\nabla J(x_N)^\top \frac{\partial x_N}{\partial \theta}\right)_j$.

Finally, for the sensitivity with respect to $x_0$,
\[ \frac{\partial J(x_N)}{\partial x_0} = \lambda_0^\top. \]
After the loop in Algorithm~\ref{alg:discrete_adjoint_computation} finishes (i.e., after the iteration for $k=0$), the variable $\lambda$ will have been updated using $\lambda_1$ and $\frac{\partial \Acal(\theta, x_0, t_0)}{\partial x_0}$, thus holding $\lambda_0$.
\end{proof}

\section{Details of Experiments}
\label{app:examples}

\textbf{Implementation Details.} 
We adopt the official implementation of \cite{chu2025provable}\footnote{\url{https://github.com/udellgroup/hypergrad}} for the online learning rate adaptation experiments, and the codebase from \cite{liu2023towards}\footnote{\url{https://github.com/xhchrn/MS4L2O}} for L2O experiments. They all follow the MIT License as specified in their respective GitHub repositories. All experiments are conducted on a workstation running Ubuntu with a 12-core Intel Xeon Platinum 8458P CPU (2.7GHz, 44 threads), one NVIDIA RTX 4090 GPU with 24GB memory, and 60GB of RAM. We note that, for both experimental setups, we have made moderate modifications to the original implementations to better align with the goals of our study. However, as the focus of this work is to explore the potential applications of stopping time in optimization rather than to achieve state-of-the-art performance across all settings, we did not perform extensive hyperparameter tuning for the stopping time–based algorithms under different configurations. This choice may explain why our method does not reach SOTA performance in some scenarios.

\textbf{NFEs of different solvers.}
Figure \ref{fig:nfe} shows that the NFE for an adaptive solver is mainly influenced by the stopping criterion. Since it does not accept a prespecified time step size, all of the statistics remain the same for different $h$.
\begin{figure}[htbp]
    \centering
    \begin{subfigure}[b]{0.49\textwidth}
        \centering
        \includegraphics[width=\linewidth]{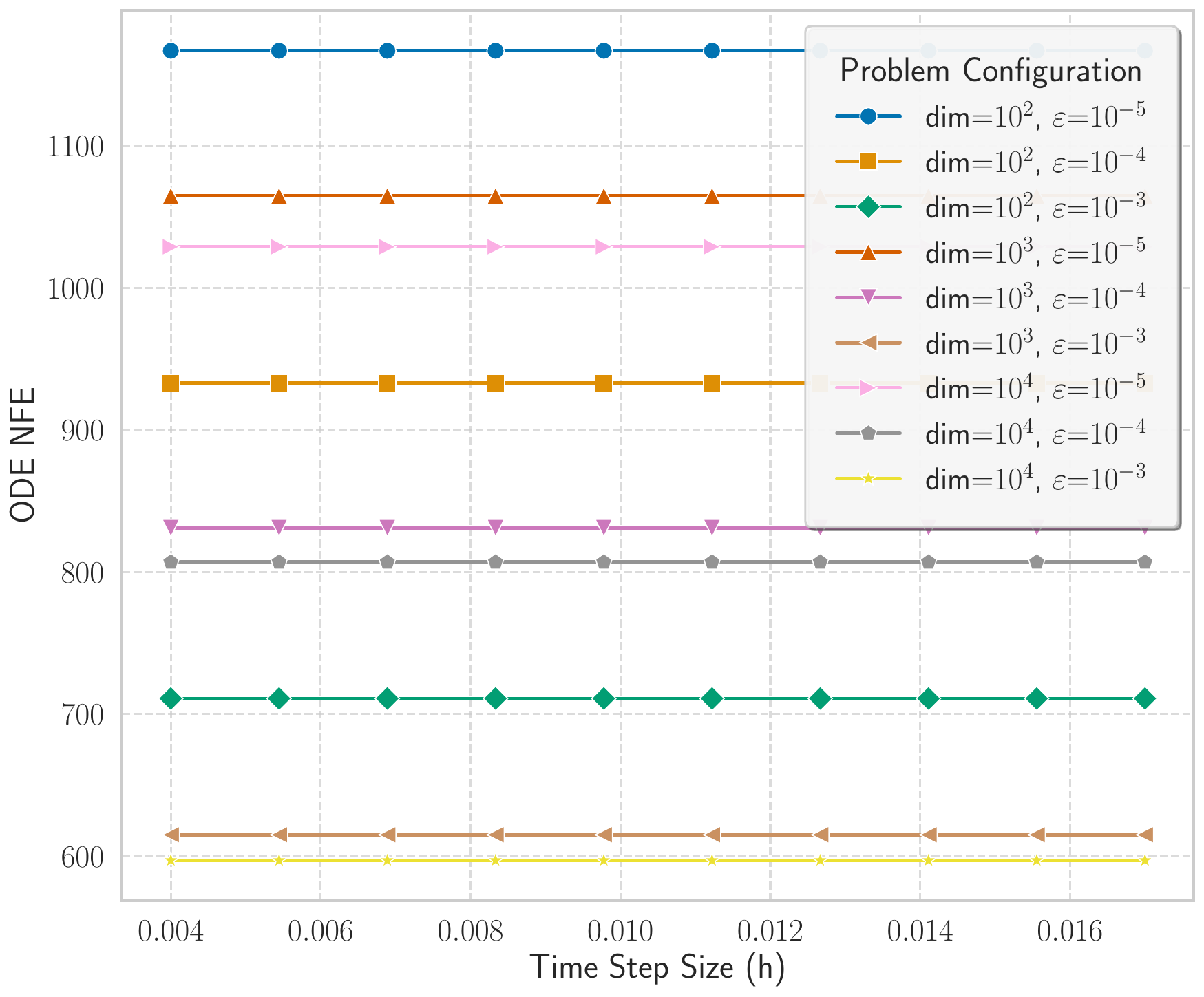}
        \caption{NFE of the adaptive ODE solver.}
        \label{fig:ode-nfe}
    \end{subfigure}
    \hfill
    \begin{subfigure}[b]{0.49\textwidth}
        \centering
        \includegraphics[width=\linewidth]{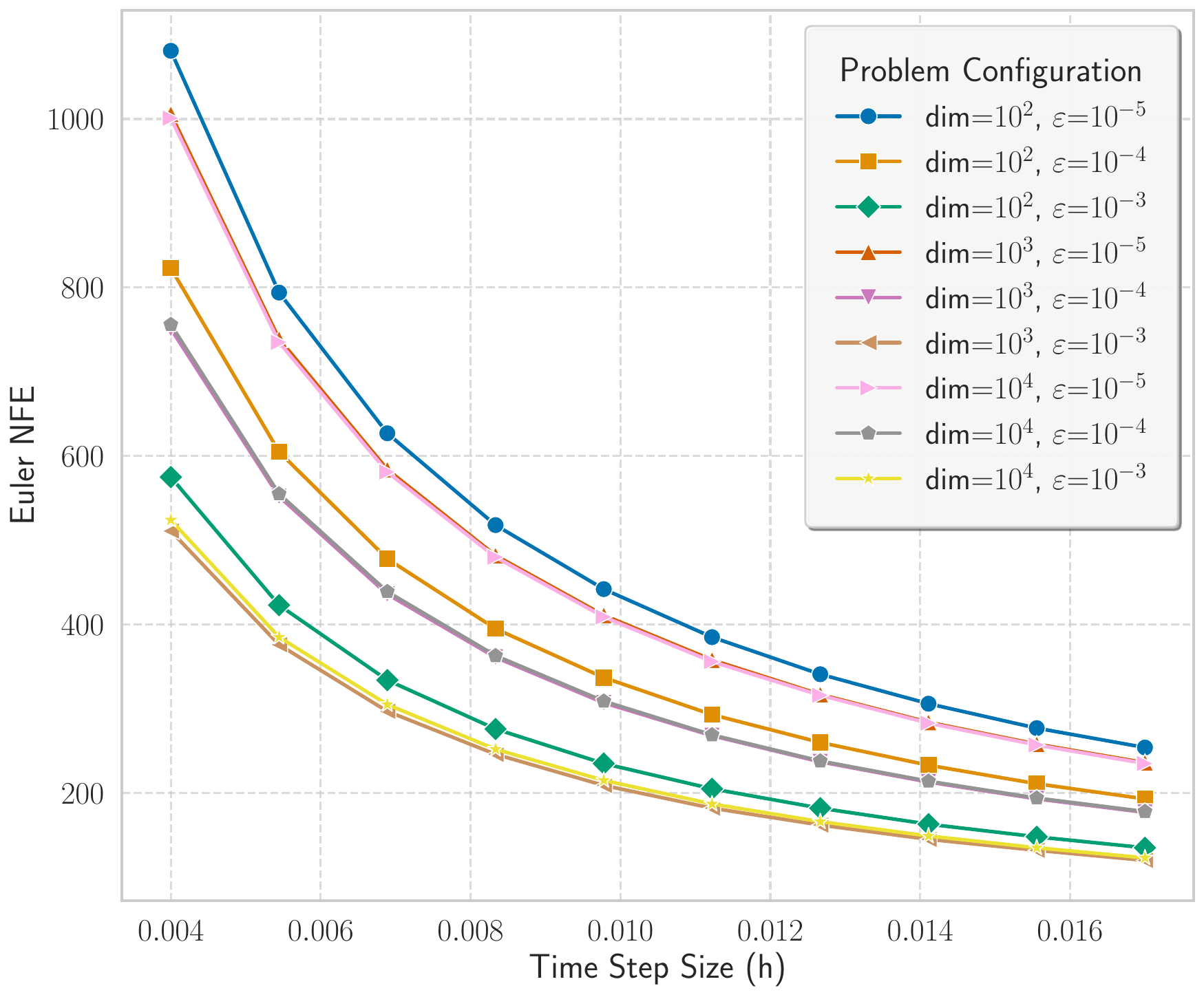}
        \caption{NFE of the Euler discretization.}
        \label{fig:euler-nfe}
    \end{subfigure}
    \caption{NFEs of different solvers.}
    \label{fig:nfe}
\end{figure}

\textbf{Hyperparameters of Baselines.}
\texttt{Adagrad} is an adaptive gradient algorithm that adjusts learning rates per coordinate based on historical gradient information. The learning rate is set $\beta\in \{10^{-3}, 10^{-2}, 10^{-1}, 1.0, 10.0, 1 / L\}$ with $\epsilon = 10^{-8}$. For Heavy-Ball method (\texttt{HB}), the momentum parameter is selected from the set $\{0.1, 0.5, 0.9, 1.0\}$. \texttt{Adam-HD} is a notable variant of Adam~\cite{atilim2018online}, which employs a hypergradient-based scheme to adaptively update the base learning rate at each iteration in an online fashion. For \texttt{Adam-HD}, the hyperparameter $\beta$ used to update the learning rate is chosen from the set $\{10^{-3}, 10^{-4}, 10^{-5}, 10^{-6}\}$. All other abbreviations follow their previously defined roles within the L2O framework. \texttt{Adam-OLA} and \texttt{Adam-HD} are all based on the classical \texttt{Adam}, where $(\beta_1,\beta_2) = (0.9,0.999)$ and $\epsilon = 10^{-8}$. The initial learning rate for Adam is selected from the set $\alpha \in \{10^{-3}, 10^{-2}, 10^{-1}, 1.0, 10.0, 1 / L\}$. $L$ is the Lipschitz constant of $\nabla f(x)$, estimated at the initial point $x_0$. The maximum number of iterations is set to 1000, with a stopping criterion tolerance of $10^{-4}$.
\begin{table}[H]
\centering
\caption{Hyperparameter settings for \texttt{Adam-OLA} on different datasets. The parameter $\beta$ controls the learning rate adaptation magnitude, and $\epsilon$ specifies the sufficient decrease threshold for triggering a learning rate update.}
\label{tab:adam_ola_params}
\begin{tabular}{lcc}
\toprule
\textbf{Dataset (Experiment)} & $\boldsymbol{\beta}$ (Learning Rate Update) & $\boldsymbol{\epsilon}$ (Descent Threshold) \\
\midrule
\texttt{a1a} (\textit{exp\_svm})        & $1 \times 10^{-2}$  & $1 \times 10^{-5}$ \\
\texttt{a2a} (\textit{exp\_svm})        & $1 \times 10^{-3}$ & $1 \times 10^{-3}$ \\
\texttt{a3a} (\textit{exp\_svm})        & $5 \times 10^{-5}$ & $5 \times 10^{-4}$ \\
\texttt{w3a} (\textit{exp\_svm})        & 0.005   & $5 \times 10^{-9}$
\\
\bottomrule
\end{tabular}
\end{table}

\textbf{Formulation of the Smooth SVM.}
In this work, we consider the problem of binary classification using a smooth variant of the SVM, where the non-smooth hinge loss is replaced by its squared counterpart to enable efficient gradient-based optimization. Given a dataset $\{(x_i, y_i)\}_{i=1}^n$ with feature vectors $x_i \in \mathbb{R}^d$ and binary labels $y_i \in \{-1, +1\}$, the objective function takes the form
$$
f(w) = \frac{1}{2} \sum_{i=1}^n \left[\max(0, 1 - y_i w^\top x_i)\right]^2 + \frac{\lambda}{2} \|w\|^2,
$$
where $\lambda > 0$ is a regularization parameter. This formulation preserves the margin-maximizing behavior of the original SVM while allowing for stable and differentiable optimization. We further incorporate an intercept term into the model by appending a constant feature to each input vector. The resulting problem is solved using first-order methods with step size determined via an estimate of the gradient's Lipschitz constant.

\textbf{More Examples of Online Learning Rate Adaptation.}
We report the performance of Algorithm \ref{alg:adam_ola} and other baseline methods. Our method shows consistent improvement in the later stage of the convergence.

\begin{figure}[htbp]
    \centering
    \begin{subfigure}[b]{0.49\textwidth}
        \centering
        \includegraphics[width=\linewidth]{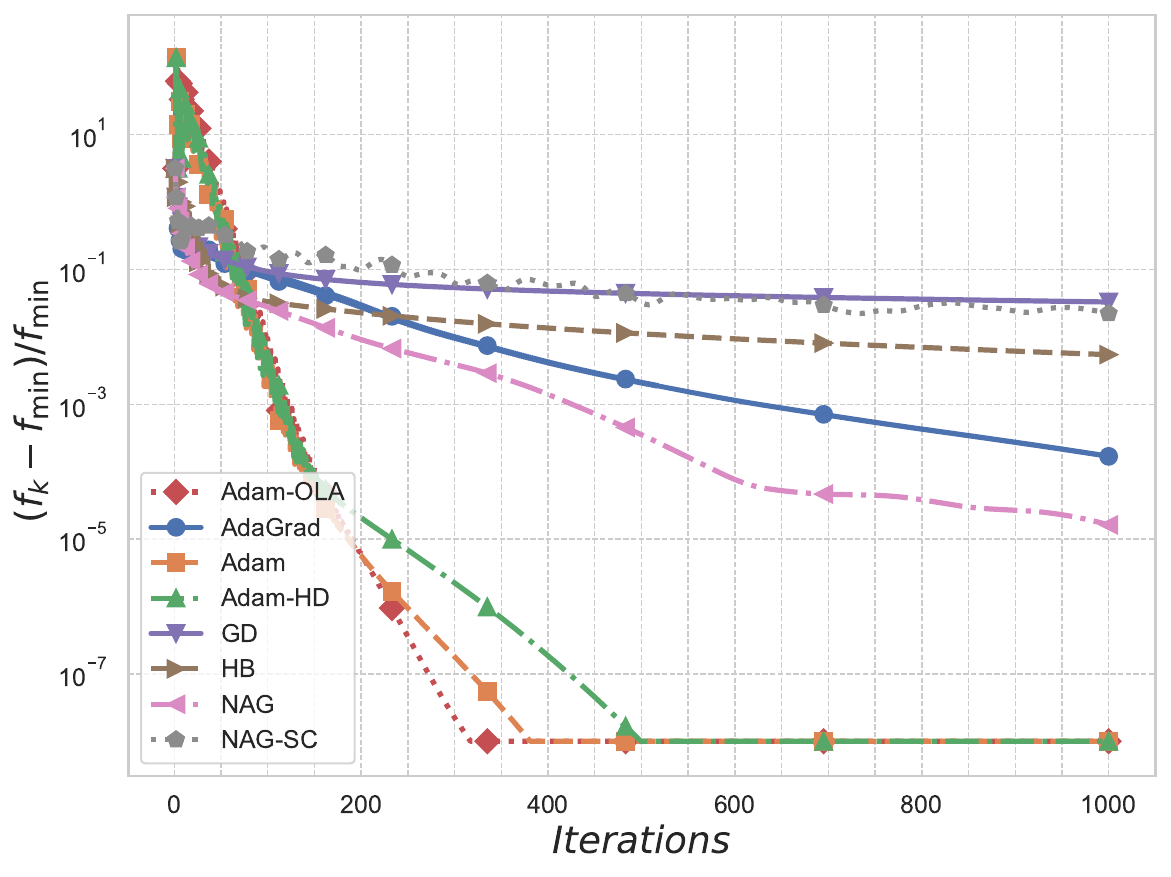}
        \caption{\texttt{a3a}}
    \end{subfigure}
    \hfill
    \begin{subfigure}[b]{0.49\textwidth}
        \centering
        \includegraphics[width=\linewidth]{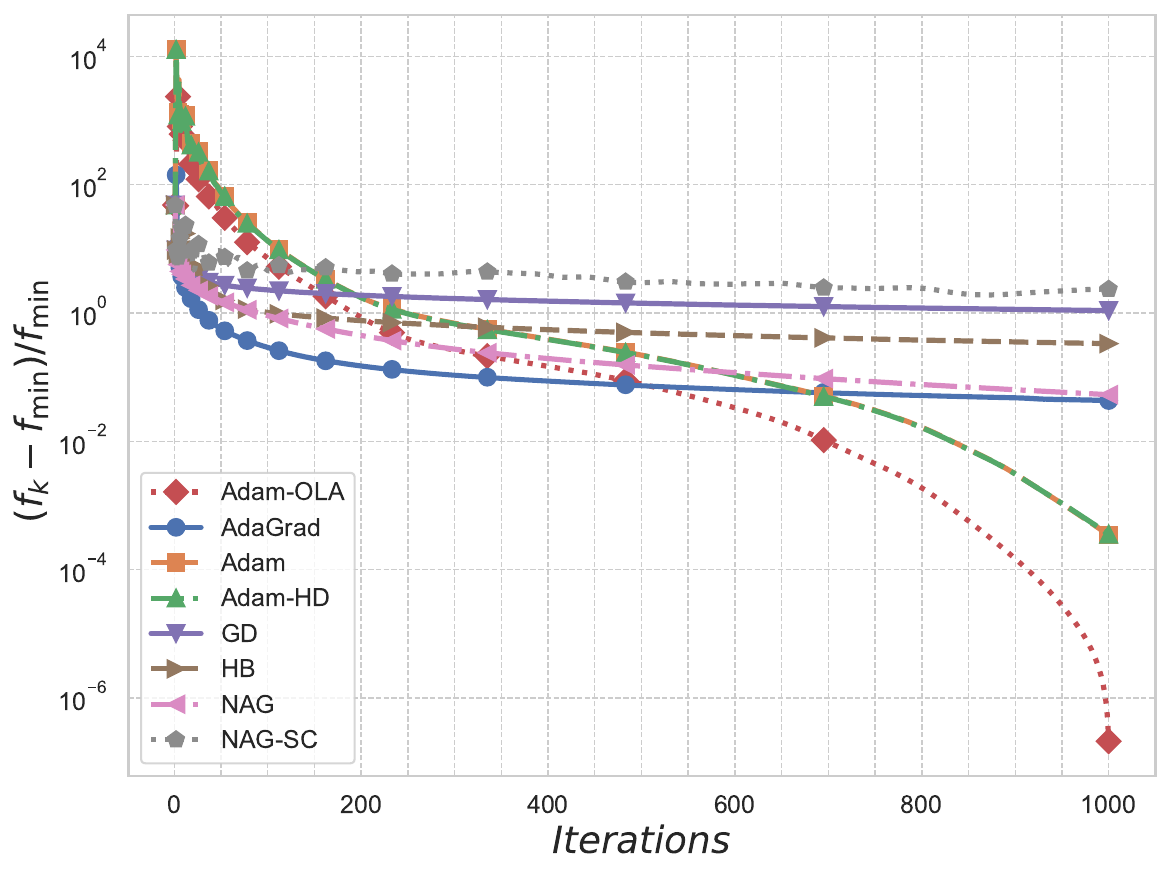}
        \caption{\texttt{w3a}}
    \end{subfigure}
    \caption{Comparison of different optimizers on smooth SVM: Function value versus iteration. Here, $f_{\min}$ denotes the minimum function value achieved across all iterations for each optimizer.}
\end{figure}


\textbf{Data Synthetic Setting for L2O.}
The data is synthetically generated. We first sample a sparse ground truth vector $x^\star \in \mathbb{R}^d$ with a prescribed sparsity level $s$, and then sample $W \in \mathbb{R}^{n \times d}$ with standard normal entries. The binary labels are generated via

$$
y_i = \mathbf{1}_{\{w_i^\top x^\star \geq 0\}}, \quad i = 1, \dots, n.
$$

A small proportion of labels are flipped to simulate noise.

\textbf{Architectures of L2O Optimizers.}
We now provide two examples of learned optimizers formulated within this framework, drawing from seminal works in the field. These learned optimizers typically output a direct parameter update $U_k$ such that $x_{k+1} = x_k + U_k$. To fit the continuous-time dynamical system framework where $x_{k+1} = x_k - h\mathcal{A}(\mathbf{w}_{opt}, x_k, t_k)$, we define $\mathcal{A}(\mathbf{w}_{opt}, x_k, t_k) = -U_k/h$. Here, $\mathbf{w}_{opt}$ denotes the parameters of the learned optimizer itself, $x_k$ are the parameters being optimized, and $h$ is the discretization step size from the underlying ODE.

\textbf{LSTM-based Optimizer.}
The influential work by Andrychowicz et al. \cite{l2o-dm} introduced an optimizer based on a Long Short-Term Memory (LSTM) network, which we denote as $m_{\mathbf{w}_{opt}}$. This optimizer operates coordinate-wise, meaning a small, shared-weight LSTM is applied to each parameter (coordinate) of the function $f(x)$ being optimized. For each coordinate, the LSTM takes the corresponding component of the gradient $\nabla f(x(t))$ and its own previous state, $\text{state}(t)$, as input to compute the parameter update component $U(t) = m_{\mathbf{w}_{opt}}(\nabla f(x(t)), \text{state}(t))$. The term $\text{state}(t)$ for each coordinate's LSTM, typically a multi-layer LSTM (e.g., two layers as used in the paper), consists of a tuple of (cell state, hidden state) pairs for each layer, i.e., $((c_{t,1}, h_{t,1}), (c_{t,2}, h_{t,2}))$ for a two-layer LSTM. These states allow the optimizer to accumulate information over the optimization trajectory, akin to momentum. The function $\mathcal{A}$ is then defined as
\begin{equation}
\mathcal{A}(\mathbf{w}_{opt}, x(t), t) = -\frac{1}{h} m_{\mathbf{w}_{opt}}(\nabla f(x(t)), \text{state}(t)).
\label{eq:lstm_optimizer}
\end{equation}
Here, $\mathbf{w}_{opt}$ are the learnable weights of the shared LSTM optimizer.

\textbf{RNNprop Optimizer.}
Building on similar principles, Lv et al. \cite{rnnprop} proposed the RNNprop optimizer. This optimizer also typically uses a coordinate-wise multi-layer LSTM (e.g., two-layer) as its core recurrent unit. Before the gradient information $\nabla f(x(t))$ is fed to the RNN, it undergoes a preprocessing step, $\mathcal{P}$. This preprocessing involves calculating Adam-like statistics, such as estimates of the first and second moments of the gradients, $s(t) = (\hat{m}(t), \hat{v}(t))$, which are then used to normalize the current gradient and provide historical context. The preprocessed features, $\mathcal{P}(\nabla f(x(t)), s(t))$, along with the RNN's previous state, $\text{state}(t)$, are input to the RNN. Similar to the LSTM-optimizer described above, $\text{state}(t)$ for each coordinate's RNN consists of the (cell state, hidden state) tuples for each of its layers. The output of the RNN is then passed through a scaled hyperbolic tangent function to produce the final update $U(t)$. Let this entire update-generating function be $U_{\mathbf{w}_{opt}}(\nabla f(x(t)), s(t), \text{state}(t))$. The corresponding $\mathcal{A}$ function is
\begin{equation}
\mathcal{A}(\mathbf{w}_{opt}, x(t), t) = -\frac{1}{h} U_{\mathbf{w}_{opt}}(\nabla f(x(t)), s(t), \text{state}(t)),
\label{eq:rnnprop_optimizer}
\end{equation}
where $U_{\mathbf{w}_{opt}}(\cdot)$ can be more specifically written as $\alpha \tanh(\text{RNN}(\mathcal{P}(\nabla f(x(t)), s(t)), \text{state}(t); \mathbf{w}_{opt}))$. The parameters $\mathbf{w}_{opt}$ encompass those for the preprocessing module $\mathcal{P}$ and the RNN, and $\alpha$ is a scaling hyperparameter.

\end{document}